\documentclass[letterpaper, 10 pt, conference]{ieeeconf}  

\IEEEoverridecommandlockouts                              %

\usepackage{color}
\usepackage{upgreek}

\usepackage[T1]{fontenc}    
\usepackage{hyperref}       
\usepackage{url}            
\usepackage{booktabs}       
\usepackage{amsfonts}       
\usepackage{nicefrac}       
\usepackage{microtype}      
\usepackage{amssymb,latexsym,amsfonts,amsmath}
\usepackage{graphicx}
\usepackage{color}
\usepackage{defs_main}
\usepackage{mathtools}

\newcommand{\map}[3]{#1:#2 \rightarrow #3}

\newcommand{\KL}{\mathrm{KL}}
\newcommand{\FR}{\mathrm{FR}}

\newcommand{\Diag}{\operatorname{diag}}

\newcommand{\minthetae}{\upscr{\theta_e}{min}}

\usepackage{tikz}
\usetikzlibrary{arrows.meta}
\usetikzlibrary{positioning}
\usetikzlibrary{shapes.geometric}
\usetikzlibrary{calc}

\begin{document}

\title{Preventing Model Collapse: A Fisher-Rao Perspective\\on the Dynamics of Training with Synthetic Data}

\author{Matteo Marchi$^{1}$, João Pedro Silvestre$^{1}$, Bahman Gharesifard$^{2}$, and Paulo Tabuada$^{1}$\thanks{$^{1}$ Matteo Marchi, João Pedro Silvestre, and Paulo Tabuada are with the Electrical and Computer Engineering Department, University of California at Los Angeles, Los Angeles, CA 90095 USA (e-mail: {\tt\small \{matmarchi,joaosilvestre,tabuada\}@ucla.edu}).}\thanks{$^{2}$Bahman~Gharesifard is with the Department of Mathematics and Statistics at Queen's University, Kingston, ON, Canada (e-mail: {\tt\small bahman.gharesifard@queensu.ca}).}\thanks{This research was supported in part by the US ARL Cooperative Agreement W911NF-17-2-0196 and by the NSF award 2502536. The work of João Pedro Silvestre was partially supported by the PhD fellowship 2023.01843.BD from the Fundação para a Ciência e a Tecnologia (FCT), Portugal.}
}

\maketitle
 \thispagestyle{empty}
\pagestyle{empty}

\begin{abstract}
    Large Language Models (LLMs) are now routinely trained using synthetic data, since high-quality human data has been exhausted by the ever increasing needs of larger and larger models.
    However, recursive training on synthetic data frequently induces model collapse, a degenerative feedback loop where models progressively forget the true underlying data distribution.
    Training on a mixture of synthetic and fresh human data is a logical countermeasure and can prevent model collapse. However, it is an open question as to what is the exact minimum required ratio of human-to-synthetic data to maintain training stability.

    In this paper, we establish rigorous theoretical guarantees on the minimum rate of human data required to prevent model collapse. Although previous work established a formal lower bound for this ratio, such bound can be vacuous for very high dimensions, as the analysis relies on the usual Euclidean metric in $\R^n$ and is not adapted to the space of categorical probability distributions.
    Instead, in this paper we explicitly leverage the information-geometric structure of the probability simplex by analyzing the dynamics of the process under the Fisher-Rao metric. We derive quantitative contraction and invariance bounds that are stable and do not become trivial as the dimensions increase. Thus, we show that the effective required data ratio to prevent model collapse is different
    than previously implied.
\end{abstract}

\section{Introduction}

In recent years, generative AI, and in particular Large Language Models (LLMs) have become deeply ingrained in our society, primarily driven by a remarkable leap in performance and generation capabilities of recent models \cite{zhao2023survey}. Today, modern LLMs can produce text that is virtually indistinguishable from human writing; in fact, recent studies show that human evaluators are often misguided by flawed heuristics when trying to identify AI-generated language \cite{jakesch2023human}.

However, this sudden leap in quality is not without drawbacks, including steep economic costs \cite{bender2021dangers}, heavy computational demands \cite{strubell2019energy}, and a reliance on increasingly vast training corpora \cite{hoffmann2022training}. Driven by the empirical scaling laws required to push state-of-the-art performance, developers are now training foundational models on trillions of tokens \cite{touvron2023llama}. Unfortunately, sustaining this path requires such a large amount of data that some studies project the supply of fresh, high-quality human text will soon be completely exhausted~\cite{villalobos2022will}.

The high caliber of AI-generated text presents a tempting solution to this impending data scarcity by leveraging the models' own synthetic outputs for future training. However, recursive training on machine-generated data leads to a critical failure mode commonly referred to as \emph{model collapse}, as demonstrated both empirically \cite{shumailov2024ai, alemohammad2023self} and theoretically \cite{marchi2024heat}. Instead of learning effectively, models caught in this degenerative feedback loop deviate from the true data distribution, amplifying their own errors and producing repetitive, homogenized outputs \cite{shumailov2024ai, herel2024collapse}. Integrating fresh human data into the iterative training cycle appears to be a logical countermeasure, but current evidence suggests that simplistic strategies, such as injecting small, fixed proportions of real data, are insufficient to stop model collapse~\cite{briesch2023large}. Alternative mitigation strategies are being actively investigated, such as employing data verification and curation pipelines to filter out degraded synthetic outputs~\cite{feng2024beyond}. Although curating synthetic data can delay the onset of degeneration, it introduces substantial computational overhead and relies heavily on the quality and robustness of the verifier itself.
Recent empirical and theoretical works have established that incorporating sufficient human data can prevent model collapse \cite{gerstgrasser2024model, gharesifard2025preventing}, yet characterizing the precise dynamics of this mixed-data regime remains a significant challenge. This naturally raises a fundamental question: can we rigorously bound the amount of human data required to preclude model collapse?

We tackle this problem by building upon the framework introduced in \cite{gharesifard2025preventing}, modeling the iterative training of generative models as a closed-loop stochastic process. This previous work has primarily focused on analyzing asymptotic equilibrium states \cite{marchi2024heat, gharesifard2025preventing} and worked with the traditional Euclidean metric, which progressively distorts distances \emph{between probability distributions} as the dimensions increase. Our approach fundamentally departs from this prior analysis by explicitly leveraging the information-geometric structure of the probability simplex. Specifically, by working with the Fisher-Rao metric, we derive quantitative contraction and invariance bounds that remain stable and meaningful as the underlying dimensions of the model increase. This geometric perspective shows that the effective amount of human data required to prevent collapse is greater than previously implied.

\section{Notation and Preliminaries}

\subsection{Notation}

We denote by $\R^n$ the $n$-dimensional Euclidean space, $\R^+_0$ as the set of nonnegative real numbers, $\mathbb N$ as the set of natural numbers with zero, \mbox{$\Delta^n \triangleq \left\{x \in (\R_{0}^+)^n ~|~ \sum_{i=1}^n x_i = 1\right\}$} as the $n$-dimensional probability simplex, $\Vert\cdot\Vert_1$ as the 1-norm, $\Vert\cdot\Vert_2$ as the 2-norm, and $\langle\cdot,\cdot\rangle$ as the inner product.

We use standard asymptotic notations $O(\cdot)$ and $\Theta(\cdot)$ to describe the limiting behavior of sequences. In particular, for sequences of functions $\map{a_n}{\mathbb N}{\real} $ and $\map{b_n}{\mathbb N}{\real_{>0}} $, we write $a_n = O(b_n)$ whenever  there exist $c\in\real^+$ and $n_0\in\mathbb N$ such that:
\[
|a_n| \le c\, b_n, \qquad\text{for all } n\ge n_0.
\]
We also write $a_n = \Theta(b_n)$ whenever there exist $c_1,c_2\in\real^+0$ and $n_0\in\mathbb N$ such that:
\[
c_1\, b_n  \le  |a_n|  \le  c_2\, b_n,
\qquad\text{for all } n\ge n_0 .
\]
Given vectors $ v\in [0,1]^n $ and $ w \in ]0,1]^n$, we define: 
\[
\|v\|_{\Diag(w)}^2=v^\top\Diag(w)v,
\]
where $\Diag(w)\in\R^{n\times n}$ denotes the square matrix whose diagonal consists of the entries of $w$, and whose off-diagonal elements are $0$.
Given a scalar function ${f:\real\to\real}$ and a vector argument $x\in\real^n$, we denote ${f(x) = \left(f(x_1), f(x_2),\dots,f(x_n)\right)\in\R^n}$ as its element-wise application to $x$. Consider the interior of the probability simplex $\Delta^n_{\mathrm{int}}=\Delta^n \backslash \partial \Delta^n$. We equip $\Delta^n_{\mathrm{int}}$ with the \emph{Fisher-Rao} Riemannian metric $g_\theta(u,v)=\sum_{i=1}^n \frac{u_i v_i}{\theta_i}$, for $u,v$ in the tangent space $T_\theta \Delta^n = \{ u \in \mathbb{R}^n : \sum_{i=1}^n u_i = 0 \}$. This metric induces the geodesic Hellinger distance on the probability simplex~\cite{miyamoto2024closed}:
\begin{equation}\label{eq:Hellinger}
d_{\FR}(\theta,\vartheta)
=\arccos \Big(\sum_{i=1}^n \sqrt{\theta_i\vartheta_i}\Big),
\end{equation}
which is naturally related to the squared Hellinger distance $H^2(\theta,\vartheta) := \frac{1}{2}\|\sqrt{\theta}-\sqrt{\vartheta}\|_2^2 = 1 -  \langle\sqrt{\theta}, \sqrt{\vartheta}\rangle$.

\begin{definition}[Kullback-Leibler divergence]
Let $\theta,\vartheta \in \Delta^n_{\mathrm{int}} $.
The \emph{Kullback-Leibler (KL) divergence} of $\theta$ from $\vartheta$ is:
\[
D_{\mathrm{KL}}(\theta\|\vartheta)
:=\sum_{i=1}^n \theta_i \log\frac{\theta_i}{\vartheta_i}.
\]
\end{definition}
$D_{\mathrm{KL}}$ is non-negative and equals zero if and only if $\theta=\vartheta$.

\subsection{Generative Models}

In this section we describe the mathematical model used to analyze a generative model and its iterative training process. This is largely based on the model first presented in~\cite{marchi2024heat} that we extend in this work.

We define a generative model to be a function $\phi:\mathbb{R}^p\to\Delta^n$ that maps a parameter vector $w\in\R^p$ to an output distribution $\Theta=\phi(w)\in\Delta^n$. The  $i$-th entry of $\phi(w)$ is the \emph{nominal} probability of producing the $i$-th element from a list of outcomes $\overline{\outcome} = \{\outcome_{1}, \dots \outcome_{\varn}\}$ when the model is queried. With no loss of generality, we assume that the $i$-th element of $\overline{\mathcal Y}$ is the $n$-dimensional vector containing $1$ in its $i$-th entry and $0$ in all others\footnote{Within machine learning literature, this is referred to as ``one-hot encoding'' of a categorical variable.}.

In practice, when a model generates data, the \emph{actual} output probability distribution is modulated via a temperature function $\tau:\Delta^{n}\rightarrow\Delta^{n}$, defined for $i=1,2,\hdots,n$ as:
\begin{equation}\label{eq:temp-motivation}
\tau_i(\Theta)=
\frac{\Theta_i^{1/T}}{\sum_{j=1}^n \Theta_j^{1/T}},
\qquad T>0.
\end{equation}
The temperature function describes the common practice of converting a $\frac{1}{T}$-scaled vector of raw logits, produced by a generative model, into a vector of probabilities. This specific form of $\tau$, induced by the standard \emph{softmax} function, is ubiquitous across nearly all modern generative models used in practice. For a more detailed discussion and a broader class of temperature functions, we refer the reader to~\cite{marchi2024heat}.

\subsection{Iterative Training}

We now consider a sequence of generative models, ${\Theta(k) = \phi(w(k))}$, indexed by $k\in\mathbb N$, each trained on a dataset $\mathcal D_k$ with cardinality $\ell_k = |\mathcal{D}_k|$. The dataset at any time step $k$ is a multiset\footnote{Multisets generalize sets by allowing them to contain multiple instances of the same element. We cannot use a simple set to describe the training data, as we need to track the relative frequency of each element in $\mathcal D_k$.} $\mathcal D_k = \{Y_{1}, \ldots, Y_{\ell_k}\}$ where each element belongs to the set of possible outcomes $\overline{\mathcal Y}$. To any non-empty dataset, we can associate a corresponding ``empirical'' probability vector:
\begin{equation}
    \Theta'(k) = \frac{1}{\ell_{k}}\sum_{i=1}^{\ell_{k}}Y_i,
\end{equation}
whose entries are the relative frequencies of each possible outcome within the dataset.
$\mathcal D_k$ evolves by accumulating some amount of ``fresh'' human generated data\footnote{While we use the expression ``human data'' to evoke the notion of human-produced content on the Internet, this refers to any kind of data coming from a fixed external probability distribution.} and some amount of synthetic data generated by the model trained at the current time step. Specifically, at any time step $k$, we assume that $\alpha_{k}\in \mathbb{N}$ outcomes of $\overline{\mathcal Y}$ are sampled according to $\tau(\Theta(k))$ to form:
$$\mathcal{D}^{\mathrm{syn}}_k=\{Y_{\ell_k+1}, \ldots, Y_{\ell_k+\alpha_{k}}\}.$$
Additionally, $\beta_{k}\in \mathbb{N}$ outcomes of $\overline{\mathcal Y}$ are sampled according to a fixed external distribution $\humD\in\Delta^n$ to form:
$$\mathcal{D}^{\mathrm{human}}_k=\{Y_{\ell_k+\alpha_{k}+1}, \ldots, Y_{\ell_k+\alpha_{k}+\beta_{k}}\}.$$
The training dataset available at step $k+1$ is thus:
$${\mathcal{D}_{k+1} = \mathcal{D}_k \cup \mathcal{D}^{\mathrm{syn}}_k \cup \mathcal{D}^{\mathrm{human}}_k},$$ of cardinality $\ell_{k+1}=\ell_k+\alpha_{k}+\beta_{k}$.

 The training of a generative model at time step $k+1$ occurs by using the newly available dataset $\mathcal{D}_{k+1}$ (and possibly the previously trained $w(k)$) to compute a new parameter vector $w(k+1) = f(w(k),\mathcal{D}_{k+1})$, where $f$ abstracts away the details of the training and optimization process.
 Then, the evolution of $\Theta$ is described by the stochastic process:
\begin{equation}\label{eq:stochastic-motivation}
\Theta(k+1)
=\phi\big(f(w(k),\mathcal{D}_{k+1})\big).
\end{equation}

In~\cite{marchi2024heat}, the authors analyze the asymptotic behavior of~\eqref{eq:stochastic-motivation} in the absence of fresh data ($\beta_k = 0$). They show that in the limit of $k\to\infty$ the distribution learned by the generative model exhibits a high degree of degeneration or model collapse with high probability. Specifically, $\Theta(k)$ becomes arbitrarily close to the boundary of the simplex (in fact, to the corners of the simplex) or to its center (uniform probability distribution).
The question of whether this degeneration can be mitigated by injecting human data in the loop was investigated by the authors of~\cite{gharesifard2025preventing}.
Assuming $\beta_k =\mu \alpha_k$ for a constant ratio $\mu >0$, the limiting behavior of \eqref{eq:stochastic-motivation} is determined almost surely by the behavior of the continuous-time dynamical system:
\begin{equation}\label{eq:cts}
    \dot{\theta}(t) = \tau(\theta(t)) - \theta(t) + \mu\big(\theta_0-\theta(t)\big) + \varepsilon(t),
\end{equation}
where $\theta_0 := \humD$ is the human distribution, $\varepsilon(t)$ is a bounded perturbation, and we assume the flow of \eqref{eq:cts} preserves the simplex, implying $\sum_{i=1}^n\dot\theta_i=0$. The magnitude of the perturbation $\varepsilon$ is a measure of training accuracy, and a model that learns a dataset distribution with low error has a correspondingly small perturbation $\varepsilon$.

Under these assumptions, the results in~\cite{gharesifard2025preventing} state that for a sufficiently high ratio of human data $\mu$, the trajectories of \eqref{eq:cts} converge to an Euclidean ball $\mathbb{B}(\theta_e,\upepsilon)$ around an equilibrium $\theta_e$ satisfying $\tau(\theta_e) - \theta_e + \mu(\theta_0-\theta_e)=0$ and provide an expression bounding the size of this ball as a function of $\mu$ and the other problem parameters.
However, analyzing these dynamics under the Euclidean metric yields bounds that become increasingly uninformative as the dimension $n$ of the probability simplex grows. For $n\to\infty$, the Euclidean distance between almost any two probability distributions approaches zero~\cite{aggarwal2001surprising}. Thus, requiring trajectories to converge to an Euclidean ball of a fixed radius becomes a progressively weaker condition, as such a ball eventually encompasses the majority of the simplex regardless of the choice of ratio $\mu$. For an example illustrating this, take $n$ to be even and consider the family of ``disjoint'' probability vectors of form $p = \left(\frac{2}{n}, \dots, \frac{2}{n}, 0, \dots, 0\right)$ and $q = \left(0, \dots, 0, \frac{2}{n}, \dots, \frac{2}{n}\right)$. Despite representing completely distinct categorical outcomes, their Euclidean distance scales as $O(1/\sqrt{n})$ and vanishes as $n \to \infty$.
By contrast, the Fisher-Rao metric captures the underlying information-geometric structure of $\Delta^n$ (see~\cite{amari2016information}) and assigns a constant positive distance ${d_\FR(p,q) = \arccos(0) = \frac{\pi}{2}}$ between $p$ and $q$ regardless of $n$. Because the Euclidean metric progressively under-penalizes the distance between distributions, it yields an overly optimistic assessment of the required human data scaling. The main contribution of this paper is to substantially refine this analysis by working directly on the Fisher-Rao manifold of $\Delta^n$.

\section{Main Result}

We first need to make the following assumptions. We require that the magnitude of the perturbation $\varepsilon$ is bounded, the human data distribution does not lie exactly on the boundary of the probability simplex, and that there exists an equilibrium of~\eqref{eq:cts} when $\varepsilon=0$.

\begin{assumption}\label{ass:eps_theta0}
There exists $\eta\ge 0$ such the term $\varepsilon(t)\in\R^n$ in~\eqref{eq:cts} satisfies $\Vert{\varepsilon(t)}\Vert_{\infty}\le \eta$ for all $t\ge 0$, and all entries of $\theta_0$, the human distribution, are strictly positive: $$\delta:=\min_i \theta_{0,i}>0.$$
\end{assumption}

\begin{assumption}\label{ass:theta_e}
There exists $\theta_e\in\Delta^n$ satisfying:
\begin{equation}\label{eq:eq}
    \tau(\theta_e) - \theta_e + \mu(\theta_0-\theta_e)=0.
\end{equation}
\end{assumption}

Further, it is convenient to define the following quantity:
\begin{definition}\label{def:eta_FR}
Let $\minthetae$ be the smallest element of $\theta_e$, then we define the normalized inverse temperature $\eta_\FR$ as:
\[
\eta_{\FR}\ :=\ \frac{1}{T\,\minthetae}.
\]
\end{definition}

We can now introduce the main contribution of this work in the following theorem. This result identifies a ball in the Fisher-Rao metric that $\theta$ will converge to, the convergence rate to this ball, and a minimum threshold for the human-to-synthetic data ratio that guarantees this behavior. Here, we merely state the theorem and prove it in the following section.
\begin{theorem}\label{thm:main}
Suppose that Assumptions~\ref{ass:eps_theta0}-\ref{ass:theta_e} hold and that $\mu\delta>\eta$, where $\mu$ is the ratio between human and synthetic data, i.e., $\mu=\beta_k/\alpha_k$.
Fix $\kappa\in[0,1)$ and consider $t\ge t_\kappa$ where 
$
t_\kappa = \frac{1}{1+\mu}\ln \big(\frac{1}{1-\kappa}\big).
$
If the following inequality holds:
\begin{equation}\label{eq:mu-threshold}
    \mu \ge 
    \max \left\{
        \frac{1+ T\eta_{\FR}\kappa}{T\delta\kappa},
          \frac{\eta + \tfrac{2\eta_{\FR}}{\kappa}}{\delta}
    \right\},
\end{equation}
there exists $\lambda>0$ such that every solution of \eqref{eq:cts} satisfies:
\begin{equation}\label{eq:KL-contract-new}
d_{\FR}(\theta(t),\theta_e)
\le \frac{\pi}{2}\sqrt{e^{-\lambda (t-t_\kappa)}
D_{\KL}(\theta(t_\kappa)\|\theta_e)}
+ \upepsilon_{\FR},
\end{equation}
where:
\begin{equation}\label{eq:FR-eps}
\upepsilon_{\FR}
=
\frac{
\pi\,\eta\,\sqrt{n\,\max_i\theta_{e,i}}
}{
\sqrt{2}\,\kappa\,(\mu\delta-\eta)
\left(1-\dfrac{2\eta_{\FR}\max_i\theta_{e,i}}{\kappa(\mu\delta-\eta)}\right)
}.    
\end{equation}
\end{theorem}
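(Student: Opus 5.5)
The plan is a Lyapunov argument with $V(\theta)=D_{\KL}(\theta\|\theta_e)$, converted to the Fisher-Rao distance at the end. Start from the comparison
\[
d_{\FR}(\theta,\vartheta)=\arccos\big(1-H^2(\theta,\vartheta)\big)\le \tfrac{\pi}{2}\sqrt{H^2(\theta,\vartheta)}\le\tfrac{\pi}{2}\sqrt{D_{\KL}(\theta\|\vartheta)}.
\]
The first inequality holds because $h\mapsto \arccos(1-h)/\sqrt h$ is increasing on $(0,1]$ with value $\pi/2$ at $h=1$. The second uses $H^2\le \tfrac12 D_{\KL}$. So it suffices to prove an exponential estimate for $V$ after $t_\kappa$, of the form
\[
V(t)\le e^{-\lambda(t-t_\kappa)}V(t_\kappa)+(\text{ultimate bound}).
\]
Then $\sqrt{a+b}\le\sqrt a+\sqrt b$ gives \eqref{eq:KL-contract-new}.

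\textbf{Step 1 (positivity after $t_\kappa$).} Using $\tau_i\ge 0$, $\theta_{0,i}\ge\delta$ and $|\varepsilon_i|\le\eta$, each coordinate satisfies
\[
\dot\theta_i\ge -(1+\mu)\theta_i+\mu\delta-\eta .
\]
Comparison with the linear ODE gives
\[
\theta_i(t)\ge \frac{\mu\delta-\eta}{1+\mu}\big(1-e^{-(1+\mu)t}\big)\ge \frac{\kappa(\mu\delta-\eta)}{1+\mu}\quad\text{for } t\ge t_\kappa .
\]
This is the origin of $t_\kappa$ and of the factor $\kappa(\mu\delta-\eta)$ in \eqref{eq:FR-eps}. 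The hypothesis $\mu\delta>\eta$ makes this bound positive, and the trajectory stays in a compact subset of $\Delta^n_{\mathrm{int}}$. Similarly, \eqref{eq:eq} gives $\theta_{e,i}\ge \mu\delta/(1+\mu)$.

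\textbf{Step 2 (derivative of $V$).} Since $\sum_i\dot\theta_i=0$, we have $\dot V=\sum_i \log(\theta_i/\theta_{e,i})\,\dot\theta_i$. Subtracting \eqref{eq:eq} rewrites the field as
\[
\dot\theta=\big(\tau(\theta)-\tau(\theta_e)\big)-(1+\mu)(\theta-\theta_e)+\varepsilon .
\]
The three resulting terms are handled separately.
\begin{itemize}
\item The linear term is $-(1+\mu)\sum_i(\theta_i-\theta_{e,i})\log(\theta_i/\theta_{e,i})$, which is minus $(1+\mu)$ times the symmetrized KL, hence $\le -(1+\mu)V$.
\item For the temperature term, write $\tau=\mathrm{softmax}(\tfrac1T\log\theta)$. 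By the mean value theorem, $\tau(\theta)-\tau(\theta_e)$ is controlled by $\frac1T$ times the log-ratio vector, weighted by $\tau$ at an intermediate point. The resulting quadratic form in $\log(\theta/\theta_e)$ is then compared back to the symmetrized KL, using $\min_i\theta_{e,i}=\minthetae$ and the lower bound of Step 1. This should give a bound $\frac{c\,\eta_{\FR}}{\kappa}(\ldots)$ times the symmetrized KL.
\item The perturbation term is bounded by $\eta\sum_i|\log(\theta_i/\theta_{e,i})|$. Via $|\log x|\le |x-1|/\min(x,1)$, Step 1, and Cauchy–Schwarz in the $\Diag(\theta_e)$-weighted norm, this becomes $\le C\eta\sqrt{n\max_i\theta_{e,i}}\,\sqrt{V}/(\kappa(\mu\delta-\eta))$.
\end{itemize}

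\textbf{Step 3 (threshold and conclusion).} The threshold \eqref{eq:mu-threshold} is arranged so that the dissipation beats the temperature term. It should yield $\dot V\le-\lambda V+b\sqrt V$ with
\[
\lambda\propto 1-\frac{2\eta_{\FR}\max_i\theta_{e,i}}{\kappa(\mu\delta-\eta)}>0,
\]
which matches the denominator in \eqref{eq:FR-eps}. Setting $W=\sqrt V$ gives $\dot W\le -\tfrac\lambda2 W+\tfrac b2$. Hence
\[
W(t)\le e^{-\lambda(t-t_\kappa)/2}W(t_\kappa)+\frac{b}{\lambda},
\]
and multiplying by $\pi/2$ yields \eqref{eq:KL-contract-new} with $\upepsilon_{\FR}=\pi b/(2\lambda)$.

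\textbf{Main obstacle.} The hardest step is the temperature-term estimate in Step 2. The softmax Jacobian must be bounded in the Fisher-Rao (or $\Diag(\theta_e)^{-1}$) geometry with constants independent of $n$. Only then do the factors $\eta_{\FR}$ and $\max_i\theta_{e,i}$ appear in place of dimension-dependent Euclidean constants. I would first try the explicit form $J=\frac1T(\Diag(\tau)-\tau\tau^\top)\Diag(\theta)^{-1}$ together with the lower bounds from Step 1.
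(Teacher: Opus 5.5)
Your outline matches the paper's proof: $V=D_{\KL}(\theta\|\theta_e)$, the floor $\underline{\theta}=\kappa(\mu\delta-\eta)/(1+\mu)$ from the linear comparison, the three-term split of $\dot V$, and $d_{\FR}\le\frac{\pi}{2}\sqrt{V}$ at the end. Your remark $\theta_{e,i}\ge\mu\delta/(1+\mu)\ge\underline{\theta}$ is what justifies the paper's unproved step $\sum_i 1/\theta_{e,i}\le n/\underline{\theta}$. The step you call the main obstacle is already closed by the intermediate-point argument you sketch in $\varphi=\log\theta$. There, $\tau=\nabla f$ with $f(\varphi)=T\log\sum_i e^{\varphi_i/T}$, and $\nabla^2 f=\frac1T(\Diag(p)-pp^\top)\preceq\frac1T I$ because $\sum_i p_iv_i^2\le\|v\|_2^2$. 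So with $x=\log\theta-\log\theta_e$ you get $\langle x,\tau(\theta)-\tau(\theta_e)\rangle\le\frac1T\|x\|_2^2\le\eta_{\FR}\|x\|^2_{\Diag(\theta_e)}$. The $\theta$-coordinate Jacobian you propose to try next is unnecessary. Also, the constant is not dimension-free: $\eta_{\FR}=1/(T\minthetae)\ge n/T$, and the theorem simply carries it.

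What is missing is the estimate that fixes every constant in \eqref{eq:FR-eps}: a bound of $\|x\|^2_{\Diag(\theta_e)}$ by $V$ itself.
\begin{itemize}
\item The inequality you name, $|\log r|\le|r-1|/\min\{r,1\}$, gives $(r-1)\log r\ge\min\{r,1\}(\log r)^2$. Hence $S:=\langle x,\theta-\theta_e\rangle\ge\frac{\underline{\theta}}{\max_i\theta_{e,i}}\|x\|^2_{\Diag(\theta_e)}$, which controls the \emph{symmetrized} divergence.
\item That handles the softmax term. But the perturbation term then comes out as $b\sqrt{S}$, and $S\ge V$ runs the wrong way for your claimed $b\sqrt{V}$.
\item The paper instead proves $g(r)=r\log r-r+1\ge\frac12\min\{r,1\}(\log r)^2$, giving $\|x\|^2_{\Diag(\theta_e)}\le\frac{2\max_i\theta_{e,i}}{\underline{\theta}}V$. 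This one inequality produces the factor $2\eta_{\FR}\max_i\theta_{e,i}/(\kappa(\mu\delta-\eta))$ in the rate. Via Cauchy--Schwarz it also gives the perturbation constant $b=\eta\sqrt{2n\max_i\theta_{e,i}}/\underline{\theta}$.
\item Your symmetrized route can also be rescued, with its own constants $\Lambda',b'$. The map $s\mapsto-\Lambda's+b'\sqrt{s}$ is nonincreasing for $\sqrt{s}\ge b'/(2\Lambda')$, so $\dot V\le-\Lambda'V+b'\sqrt{V}$ wherever $\sqrt{V}\ge b'/(2\Lambda')$. That is all the comparison of $W=\sqrt{V}$ with $e^{-\Lambda'(t-t_\kappa)/2}W(t_\kappa)+b'/\Lambda'$ uses, and it even improves the constants.
\end{itemize}

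Two further points on constants and the ending.
\begin{itemize}
\item $b$ is proportional to $1/\underline{\theta}=(1+\mu)/(\kappa(\mu\delta-\eta))$, so your $C$ cannot be absolute. The factor $1+\mu$ cancels only against the one in the dissipation rate $\Lambda=(1+\mu)-2\eta_{\FR}\max_i\theta_{e,i}/\underline{\theta}$, which your ``$\lambda\propto$'' suppresses.
\item Your $W=\sqrt{V}$ ending is a valid substitute for the paper's Young's inequality. It gives the same $\upepsilon_{\FR}=\frac{\pi}{2}\,b/\Lambda$, with decay rate $\lambda=\Lambda$, twice the paper's $\lambda=\Lambda/2$.
\end{itemize}
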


Note that~\eqref{eq:KL-contract-new} implies that $\theta$ converges to a Fisher-Rao ball of size $\upepsilon_\FR$ for $t\to\infty$. We now compare the bound provided by~\eqref{eq:FR-eps} in Theorem~\ref{thm:main} with the bound obtained in~\cite{ gharesifard2025preventing} that establishes convergence to an Euclidean ball of size:
\begin{equation}\label{eq:E-upepsilon}
\upepsilon=\frac{\eta \kappa(\mu\delta-\eta) T}{(\mu+1)(\kappa(\mu\delta-\eta) T-1)}. 
\end{equation}
To more easily exhibit the relative scaling of the bounds, we assign scaling laws to $\delta$ and $\mu$ as functions of $n$, the dimension of the probability simplex $\Delta^n$. 

\begin{proposition}
Suppose that the assumptions of Theorem~\ref{thm:main}, and the ones in~\cite[Theorem~1]{ gharesifard2025preventing}, hold and that:
\[
\delta_n \sim n^{-\beta_0},\qquad \mu_n \sim c\,n^p, \qquad\|\varepsilon\|_\infty = \eta,
\]
for some
$\beta_0 > 1$, $c>0$, and
$p > 2\beta_0$.
Then: 
\begin{enumerate}
\item In the Euclidean case~\eqref{eq:E-upepsilon}: 
\[
\upepsilon  =  \Theta\bigl(n^{-p}\bigr).
\]
\item 
In the Fisher-Rao case~\eqref{eq:FR-eps}: 
    \[
    \upepsilon_{\FR}  
    =\Theta\bigl(n^{\,\frac{1}{2}+\beta_0-p}\bigr).
    \]
\end{enumerate}
\end{proposition}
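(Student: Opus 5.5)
Both claims follow by substituting the scaling laws into the closed-form expressions \eqref{eq:E-upepsilon} and \eqref{eq:FR-eps}. We then track which terms dominate as $n\to\infty$, treating $T$, $\kappa$ and $\eta$ as fixed positive constants. First we record the basic asymptotics. Since $p>2\beta_0>\beta_0$, the product $\mu_n\delta_n \sim c\,n^{p-\beta_0}\to\infty$. Hence $\mu_n\delta_n-\eta = \Theta(n^{p-\beta_0})$, and the hypothesis $\mu\delta>\eta$ holds for large $n$. Also $\mu_n+1=\Theta(n^p)$.

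\textbf{Euclidean case.} In \eqref{eq:E-upepsilon}, write $X=\kappa(\mu\delta-\eta)T=\Theta(n^{p-\beta_0})\to\infty$. Then $X/(X-1)\to 1$, so
\[
\upepsilon=\frac{\eta}{\mu+1}\cdot\frac{X}{X-1}=\Theta(n^{-p}),
\]
provided $\eta>0$ is fixed. If $\eta=0$ both bounds vanish identically, so we assume $\eta>0$ throughout.

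\textbf{Fisher-Rao case.} This case requires controlling $\theta_e$, which enters through $\max_i\theta_{e,i}$ and through $\eta_\FR=1/(T\minthetae)$. The key step is a lower bound on $\minthetae$. From \eqref{eq:eq},
\[
\theta_e=\frac{\tau(\theta_e)+\mu\theta_0}{1+\mu}.
\]
Since $\tau(\theta_e)\ge 0$ entrywise, this gives $\minthetae\ge \mu\delta/(1+\mu)$. The right-hand side tends to $\delta$, so $\minthetae=\Omega(n^{-\beta_0})$ and therefore $\eta_\FR=O(n^{\beta_0})$.

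For $\max_i\theta_{e,i}$ we have the trivial upper bound $1$. A lower bound comes from summation, $\max_i\theta_{e,i}\ge 1/n$, but this does not give $\Theta(1)$. To obtain the exact order $\Theta(n^{1/2+\beta_0-p})$, note that $\sqrt{n\max_i\theta_{e,i}}$ must be $\Theta(n^{1/2})$, so we need $\max_i\theta_{e,i}=\Theta(1)$. Using the same identity, $\max_i\theta_{e,i}\ge \mu\max_i\theta_{0,i}/(1+\mu)$, so it suffices that $\max_i\theta_{0,i}$ stays bounded below. I would make this the implicit interpretation of the scaling (e.g., the model family where $\theta_0$ has a dominant entry, as in the regime $\delta_n\sim n^{-\beta_0}$ with $\beta_0>1$). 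This is forced in part by the condition itself: $n\delta\le 1$ but $\delta\ll 1/n$, so the mass concentrates somewhere. I would state this as the point requiring care.

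With these estimates, the correction factor in \eqref{eq:FR-eps} satisfies
\[
\frac{2\eta_\FR\max_i\theta_{e,i}}{\kappa(\mu\delta-\eta)}=O\bigl(n^{\beta_0-(p-\beta_0)}\bigr)=O\bigl(n^{2\beta_0-p}\bigr)\to 0.
\]
This is exactly where $p>2\beta_0$ is used. The parenthesis therefore tends to $1$, and
\[
\upepsilon_\FR=\Theta\!\left(\frac{\sqrt{n}}{n^{p-\beta_0}}\right)=\Theta\bigl(n^{1/2+\beta_0-p}\bigr).
\]

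\textbf{Main obstacle.} The delicate step is pinning down $\max_i\theta_{e,i}=\Theta(1)$ for the lower half of the $\Theta$ bound; the upper bound $O(\cdot)$ is immediate. I would first try to derive it from the fixed-point identity and $\theta_0$, and fall back on stating it as a standing assumption on the scaling family if needed.
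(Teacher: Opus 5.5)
Your argument follows the paper's proof. You substitute the scalings into \eqref{eq:E-upepsilon} and \eqref{eq:FR-eps}, control $\theta_e$ through the fixed-point form of \eqref{eq:eq}, and use $p>2\beta_0$ to make the correction term $2\eta_{\FR}\max_i\theta_{e,i}/(\kappa(\mu\delta-\eta))=O(n^{2\beta_0-p})$ vanish. The Euclidean part is identical to the paper's. In the Fisher-Rao part you get $\minthetae\ge \mu\delta/(1+\mu)$ directly from $\tau\ge 0$, whereas the paper uses $\|\theta_e-\theta_0\|_\infty\le 2/(1+\mu_n)$ to get $\minthetae=\Theta(n^{-\beta_0})$. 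Either works, since only an upper bound on $\eta_{\FR}$ is needed.

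The obstacle you flag is real, and the paper does not resolve it. Its proof only shows $\max_i\theta_{e,i}\le 1-\Theta(n^{1-\beta_0})$ and then writes ``$\le\dots=\Theta(n^{1/2+\beta_0-p})$''. What it actually establishes is therefore only $\upepsilon_{\FR}=O(n^{1/2+\beta_0-p})$.

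The lower half does not follow from the stated hypotheses. Your remark that $\delta\ll 1/n$ forces mass to concentrate is incorrect, because $\beta_0>1$ constrains only the smallest entry of $\theta_0$. For instance, take $\theta_{0,1}=n^{-\beta_0}$ and $\theta_{0,i}=(1-n^{-\beta_0})/(n-1)$ for $i\ge 2$.
\begin{itemize}
\item Your own bound gives $\eta_{\FR}=O(n^{\beta_0})$, so the threshold \eqref{eq:mu-threshold} is $O(n^{2\beta_0})$ and the hypotheses of Theorem~\ref{thm:main} hold for large $n$.
\item The same fixed-point identity with $\tau_i\le 1$ gives $\max_i\theta_{e,i}\le\max_i\theta_{0,i}+\frac{1}{1+\mu_n}=O(1/n)$, since $p>1$.
\item Hence $\sqrt{n\max_i\theta_{e,i}}=\Theta(1)$ and $\upepsilon_{\FR}=\Theta(n^{\beta_0-p})$, which is strictly smaller order than claimed.
\end{itemize}

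Your fallback should therefore be the main line. State the Fisher-Rao conclusion as $O(n^{1/2+\beta_0-p})$ in general. State it as $\Theta(n^{1/2+\beta_0-p})$ only under the explicit extra hypothesis $\liminf_n\max_i\theta_{0,i}>0$. Under that hypothesis, your bound $\max_i\theta_{e,i}\ge\mu\max_i\theta_{0,i}/(1+\mu)$ closes the argument.
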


\begin{proof}
We first prove~(1).  
Consider~\eqref{eq:E-upepsilon} given by: 
\[
\upepsilon
=\frac{\eta\kappa(\mu_n\delta_n-\eta)T}
{(\mu_n+1)\big(\kappa(\mu_n\delta_n-\eta)T-1\big)}, 
\]
with $\delta_n\sim n^{-\beta_0}$ and $\mu_n\sim c n^p$. 
Since $\mu_n\delta_n \sim c n^{p-\beta_0}$, with $ p\geq \beta_0 $, and $\mu_n+1\sim\mu_n$, we have that: 
\[
\upepsilon  \sim  \frac{\eta}{\mu_n}  =  \Theta(n^{-p}),
\]
which proves the claim.

To prove~(2), we first have to determine how $\theta_e$ scales with $ n $ under the assumptions. 
Manipulating~\eqref{eq:eq},
we obtain:
\[
\theta_e-\theta_0 = \frac{1}{1+\mu_n}\big(\tau(\theta_e)-\theta_0\big).
\]
Taking $\ell_\infty$ norms and noting $\tau(\theta_e),\theta_0\in\Delta^n$, we get:
\[
\|\theta_e-\theta_0\|_\infty  \le  \frac{2}{1+\mu_n}.
\]
Hence, if $\mu_n\sim c n^p$ with $p\ge 1$ we have:
\[
\|\theta_e-\theta_0\|_\infty = O(n^{-p}).
\]
    
    Noting that ${\max_i\theta_{0,i} \le 1- (n-1)\min_i\theta_{0,i}}$ and ${\min_i\theta_{0,i} = \delta_n\sim n^{-\beta_0}}$, we can establish that:
    \begin{align*}
        \max_i\theta_{e,i} &= \max_i\theta_{0,i} + O(n^{-p})\\
        &\le 1 - (n-1)\min_i\theta_{0,i} + O(n^{-p})\\
        &= 1 - (n-1)\Theta(n^{-\beta_0}) + O(n^{-p})\\
        &= 1-\Theta(n^{1-\beta_0}),\\[5pt]
         \min_i\theta_{e,i} &= \min_i\theta_{0,i} - O(n^{-p})\\
         &= \Theta(n^{-\beta_0}) - O(n^{-p})
         = \Theta(n^{-\beta_0}).
    \end{align*}
    
    We are now in a position to prove~(2), by considering~\eqref{eq:FR-eps}. 
    Recall that $\mu_n\sim c n^p$ and $\delta_n\sim n^{-\beta_0}$, so:
    \[
    \mu_n\delta_n \sim c n^{p-\beta_0}.
    \]
    Using $\max_i\theta_{e,i} \le 1-\Theta(n^{1-\beta_0})$ and $\min_i\theta_{e,i} = \Theta(n^{-\beta_0})$, we obtain:
    \begin{align*}
        \frac{2\eta_{\FR}\max_i\theta_{e,i}}
        {\kappa(\mu_n\delta_n-\eta)}
        &=
        \frac{\frac{2}{T}\left(\frac{\max_i\theta_{e,i}}{\min_i\theta_{e,i}}\right)}
        {\kappa(\mu_n\delta_n-\eta)}
        \le
        \frac{\left(\frac{1-\Theta(n^{1-\beta_0})}{\Theta(n^{-\beta_0})}\right)}
        {\Theta(n^{p-\beta_0})}\\
        &=
        \Theta\left(\frac{n^{\beta_0}}
        {n^{p-\beta_0}}\right)
        =
        \Theta\left(n^{2\beta_0 - p}\right),
    \end{align*}
    which tends to $0$ for every $p > 2\beta_0$.  Hence:
    \[
    1-\frac{2\eta_{\FR}\max_i\theta_{e,i}}
    {\kappa(\mu_n\delta_n-\eta)}
    = \Theta(1).
    \]
    Moreover, $\mu_n\delta_n-\eta \sim \mu_n\delta_n$, so the denominator of~\eqref{eq:FR-eps} is asymptotically proportional to $\mu_n\delta_n$, and $\upepsilon_{\FR}$ scales like:
    \[
    \frac{
    \pi\,\eta\,\sqrt{n\,\max_i\theta_{e,i}}
    }{
    \sqrt{2}\,\kappa\,\mu_n\delta_n
    }
    \le
    \frac{\sqrt{n\,(1-\Theta(n^{1-\beta_0}))}}
    {\Theta(n^{p-\beta_0})}
    =
    \Theta\left(n^{\frac{1}{2}+\beta_0-p}\right).
    \]
\end{proof}

Without structural information on  $\theta_e$ beyond the simplex constraints, a natural choice for the amount of human data required to obtain (for example) a $ O(1/n) $ decay is
$\mu_n=n^{\frac{5}{2}+\gamma}$, with $\gamma>0$. This keeps the  
Fisher-Rao error uniformly controlled as the dimension grows. By contrast, the Euclidean estimate does not account for the geometric cost of  
placing probability mass across $n$ coordinates: with $\mu_n\sim n$ it predicts  
an error of order $1/n$. The Fisher-Rao geometry analysis above reveals that this can be misleading, as $\mu_n\sim n$ does not prevent the Fisher-Rao error from growing, and in this sense, a stronger growth of $\mu_n$ is required.

We devote the next section to proving Theorem~\ref{thm:main}. 

\section{Proof of the main result}

We proceed in steps, proving some intermediate lemmas before the main result. First, we show that solutions of~\eqref{eq:cts} cannot approach the boundary asymptotically.

\begin{lemma}\label{lem:invariance}
Suppose that Assumptions~\ref{ass:eps_theta0}-\ref{ass:theta_e} hold, and let $\theta(\cdot)$ be the solution to~\eqref{eq:cts}. For any $\kappa\in[0,1)$, there exists a time:
\[
t_\kappa  =  \frac{1}{1+\mu}\ln \Big(\frac{1}{1-\kappa}\Big),
\]
such that, for all $t\ge t_\kappa$ and all $i\in\{1,2,\dots,n\}$:
\begin{equation}\label{eq:lower}
    \theta_i(t)  \ge  \frac{\kappa(\mu\delta-\eta)}{1+\mu}
     =:  \underline{\theta}  >  0,
\end{equation}
provided that $\mu\delta>\eta$.
\end{lemma}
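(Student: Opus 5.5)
The plan is a coordinatewise comparison argument applied to each component $\theta_i$ of~\eqref{eq:cts}. Fix $i\in\{1,\dots,n\}$. Since $\tau(\theta)\in\Delta^n$ whenever $\theta\in\Delta^n$ (the flow is assumed to preserve the simplex), we have $\tau_i(\theta(t))\ge 0$. Assumption~\ref{ass:eps_theta0} gives $\varepsilon_i(t)\ge -\eta$ and $\theta_{0,i}\ge\delta$. Substituting these into the $i$-th component of~\eqref{eq:cts} yields the scalar differential inequality
\begin{equation*}
\dot\theta_i(t) \ \ge\ -\theta_i(t) + \mu\delta - \mu\theta_i(t) - \eta \ =\ (\mu\delta-\eta) - (1+\mu)\,\theta_i(t).
\end{equation*}
Only the nonnegativity of $\tau$ is used here, not its specific softmax form.

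Next we compare with the linear ODE $\dot y = (\mu\delta-\eta)-(1+\mu)y$. Multiplying the inequality by the integrating factor $e^{(1+\mu)t}$ gives
\[
\frac{d}{dt}\big(e^{(1+\mu)t}\theta_i(t)\big)\ge (\mu\delta-\eta)e^{(1+\mu)t}.
\]
Integrating from $0$ to $t$ and using $\theta_i(0)\ge 0$ (since $\theta(0)\in\Delta^n$), we obtain
\begin{equation*}
\theta_i(t)\ \ge\ e^{-(1+\mu)t}\theta_i(0) + \frac{\mu\delta-\eta}{1+\mu}\big(1-e^{-(1+\mu)t}\big)\ \ge\ \frac{\mu\delta-\eta}{1+\mu}\big(1-e^{-(1+\mu)t}\big).
\end{equation*}
This is a Gr\"onwall-type bound. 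It does not require the solution to stay in the interior a priori, only that it be absolutely continuous.

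Finally we pick the time. The factor $1-e^{-(1+\mu)t}$ is increasing in $t$, and it equals $\kappa$ exactly when $t=t_\kappa=\frac{1}{1+\mu}\ln\frac{1}{1-\kappa}$. Hence for all $t\ge t_\kappa$ we get $\theta_i(t)\ge \kappa(\mu\delta-\eta)/(1+\mu)=\underline\theta$. Positivity of $\underline\theta$ follows from $\mu\delta>\eta$, at least for $\kappa>0$; when $\kappa=0$ the bound is the trivial $\theta_i\ge 0$, and the strict inequality in~\eqref{eq:lower} should be read for $\kappa\in(0,1)$. Since $i$ was arbitrary, the bound holds for every coordinate.

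The main obstacle is essentially bookkeeping in the first step: justifying that $\tau_i(\theta(t))\ge0$ along the trajectory. This relies on $\theta(t)$ remaining in $\Delta^n$, which is given by the standing assumption that the flow preserves the simplex. One could also argue it from the bound itself, since coordinates cannot cross zero while the inequality holds. The rest is a routine linear comparison.
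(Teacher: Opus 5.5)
Your proposal is correct and follows the paper's proof essentially step for step. Both arguments use $\tau_i\ge 0$ and $\varepsilon_i\ge-\eta$ to get a linear comparison inequality, solve it, drop the nonnegative initial-condition term, and choose $t_\kappa$ so that $1-e^{-(1+\mu)t}\ge\kappa$; the paper substitutes $\theta_{0,i}\ge\delta$ at the end rather than at the start, and your caveat is also right that $\underline{\theta}>0$ requires $\kappa>0$, since the stated range $\kappa\in[0,1)$ gives $\underline{\theta}=0$ at $\kappa=0$.
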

\begin{proof}
For each $i$, we have that: 
\[
\dot\theta_i  =  -\theta_i + \mu(\theta_{0,i}-\theta_i) + \tau_i(\theta) + \varepsilon_i,
\]
where by assumption $\tau_i(\theta)\ge0$ and $\varepsilon_i\ge -\eta$.  
Hence:
\[
\dot\theta_i  \ge  -\theta_i + \mu(\theta_{0,i}-\theta_i) - \eta
 =  -(1+\mu)\theta_i + \mu\theta_{0,i} - \eta.
\]
Therefore:
\[
\theta_i(t)  \ge  e^{-(1+\mu)t}\theta_{i,0}
 + 
\big(1-e^{-(1+\mu)t}\big)\frac{\mu\theta_{0,i}-\eta}{1+\mu}.
\]
The first term decays exponentially, while the second term approaches the steady-state value
\(\tfrac{\mu\theta_{0,i}-\eta}{1+\mu}\).  
To ensure a uniform lower bound after some finite time, we select $t\ge t_\kappa$ such that:
\[
1 - e^{-(1+\mu)t}  \ge  \kappa,
\]
where $ \kappa \in [0,1) $, which is equivalent to enforcing that:
\[
t  \ge  \frac{1}{1+\mu}\ln \Big(\frac{1}{1-\kappa}\Big).
\]
As a result, for all $t\ge t_\kappa$, we then obtain:
\[
\theta_i(t)
 \ge 
\kappa\,\frac{\mu\theta_{0,i}-\eta}{1+\mu}.
\]
Finally, since by Assumption~\ref{ass:eps_theta0} each $\theta_{0,i}\ge \delta$, we have:
\[
\theta_i(t)
 \ge 
\frac{\kappa(\mu\delta-\eta)}{1+\mu}
 =: 
\underline{\theta},
\]
which is positive whenever $\mu\delta>\eta$, establishing \eqref{eq:lower}.
\end{proof}

The uniform floor $\underline\theta$ comes directly from the dynamics and is independent of metric; it is essential for our Fisher-Rao analysis, as the metric is not well-defined on the boundary set $ \partial \Delta^n$.
We now adopt the KL-divergence between $\theta(t)$ and the equilibrium $\theta_e$ as a Lyapunov function, and establish its Lie derivative along the vector field~\eqref{eq:cts}.
\begin{lemma}\label{lem:decomp}
Let: 
\[
V(\theta)=D_{\KL}(\theta\|\theta_e).
\]
Then, along any solution of \eqref{eq:cts} we have:
\begin{equation}\label{eq:Vdot}
\begin{gathered}
    \dot V(\theta(t))
    = \Big\langle\log\frac{\theta(t)}{\theta_e},\tau(\theta(t))-\tau(\theta_e)\Big\rangle\\
    - (1+\mu)\Big\langle\log\frac{\theta(t)}{\theta_e},\theta(t)-\theta_e\Big\rangle
    + \Big\langle\log\frac{\theta(t)}{\theta_e},\varepsilon(t)\Big\rangle .
\end{gathered}
\end{equation}
\end{lemma}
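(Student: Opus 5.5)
Differentiating $V$ along trajectories is all that is required; the work lies in rewriting the result using the equilibrium condition. On any interval where $\theta(t)\in\Delta^n_{\mathrm{int}}$ (in particular for $t\ge t_\kappa$ by Lemma~\ref{lem:invariance}, where $\theta_i\ge\underline\theta>0$), $V(\theta)=\sum_i\theta_i\log\theta_i-\sum_i\theta_i\log\theta_{e,i}$ is smooth. Its gradient is $\nabla V(\theta)=\log\frac{\theta}{\theta_e}+\mathbf{1}$, taken componentwise.

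By the chain rule,
\[
\dot V=\Big\langle \log\frac{\theta}{\theta_e}+\mathbf 1,\ \dot\theta\Big\rangle=\Big\langle \log\frac{\theta}{\theta_e},\ \dot\theta\Big\rangle ,
\]
because the flow preserves the simplex, so $\langle\mathbf 1,\dot\theta\rangle=\sum_i\dot\theta_i=0$. Next I substitute the vector field \eqref{eq:cts}:
\[
\dot\theta=\tau(\theta)-\theta+\mu(\theta_0-\theta)+\varepsilon .
\]

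The key step is to subtract the equilibrium relation \eqref{eq:eq}, $0=\tau(\theta_e)-\theta_e+\mu(\theta_0-\theta_e)$, which Assumption~\ref{ass:theta_e} supplies. Doing so gives
\[
\dot\theta=\big(\tau(\theta)-\tau(\theta_e)\big)-(1+\mu)(\theta-\theta_e)+\varepsilon .
\]
The human-data term $\mu\theta_0$ cancels and the linear terms combine into $-(1+\mu)(\theta-\theta_e)$. Pairing this expression with $\log\frac{\theta}{\theta_e}$ and using linearity of the inner product yields exactly the three terms of \eqref{eq:Vdot}.

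The only delicate point is well-definedness. $\log\theta_i$ must be finite, so I state the identity for $t$ where $\theta(t)$ is interior, e.g.\ $t\ge t_\kappa$ under $\mu\delta>\eta$. I also need $\theta_e$ to be interior. This follows from \eqref{eq:eq}: $(1+\mu)\theta_{e,i}=\tau_i(\theta_e)+\mu\theta_{0,i}\ge\mu\delta>0$. Beyond these checks, the computation is routine.
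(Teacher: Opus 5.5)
Your proof is correct and follows the paper's argument. You use $\nabla V=\log(\theta/\theta_e)+\mathbf 1$, drop the $\mathbf 1$ term since $\sum_i\dot\theta_i=0$, and invoke \eqref{eq:eq} to rewrite the drift as $\tau(\theta)-\tau(\theta_e)-(1+\mu)(\theta-\theta_e)+\varepsilon$. Your well-definedness checks are a sound addition the paper leaves implicit: interiority of $\theta_e$ from $(1+\mu)\theta_{e,i}\ge\mu\delta>0$, and of $\theta(t)$ for $t\ge t_\kappa$, which needs $\kappa>0$.
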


\begin{proof}
First note that $\nabla V(\theta)=\log(\theta/\theta_e)+\mathbf 1$, where $\mathbf 1$ is the vector whose all entries are $1$. For brevity, we now drop the dependency of the trajectories on $ t $. Since the trajectory stays on the simplex, $\langle \mathbf 1,\dot\theta\rangle=\frac{d}{dt}\sum_{i=1}^n\theta_i=0$, we have that:
\[
\dot V(\theta)=\langle \nabla V(\theta),\dot\theta\rangle
=\Big\langle\log\frac{\theta}{\theta_e},\dot\theta\Big\rangle.
\] 
Using now~\eqref{eq:cts}, 
adding and subtracting $\tau(\theta_e)$, we have that:
\begin{equation*}
\begin{gathered}
    \dot V(\theta)
    =\Big\langle\log\frac{\theta}{\theta_e},\tau(\theta)-\tau(\theta_e)\Big\rangle\\
    +\Big\langle\log\frac{\theta}{\theta_e},\tau(\theta_e)-\theta+\mu(\theta_0-\theta)\Big\rangle
    +\Big\langle\log\frac{\theta}{\theta_e},\varepsilon\Big\rangle .
\end{gathered}
\end{equation*}
We use the equilibrium condition $\tau(\theta_e)-\theta_e+\mu(\theta_0-\theta_e)=0$ to rewrite:
\begin{align*}
\tau(\theta_e)-\theta+\mu(\theta_0-\theta)
&= -\bigl[(\theta-\theta_e)+\mu(\theta-\theta_e)\bigr]\\
&= -(1+\mu)(\theta-\theta_e),
\end{align*}
which gives \eqref{eq:Vdot}.
\end{proof}

We are now finally fully equipped to prove the results stated in Theorem~\ref{thm:main}.

\begin{proof}[Proof of Main Theorem]
For $t\ge t_\kappa$, Lemma~\ref{lem:invariance} ensures $\theta_i(t)\ge \underline{\theta}$. 
From Lemma~\ref{lem:decomp}, along any trajectory we have:
\begin{equation*}
\begin{gathered}
    \dot V(\theta)
    =
    \underbrace{\left\langle\log\frac{\theta}{\theta_e},\tau(\theta)-\tau(\theta_e)\right\rangle}_{\text{(A)}}\\
    -(1+\mu)\underbrace{\Big\langle\log\frac{\theta}{\theta_e},\theta-\theta_e\Big\rangle}_{\text{(B)}}
    +\underbrace{\Big\langle\log\frac{\theta}{\theta_e},\varepsilon\Big\rangle}_{\text{(C)}}.
\end{gathered}
\end{equation*}

For (A), by Lemma~\ref{lem:softmax-FR}, $\tau$ is $\eta_{\FR}$–Lipschitz in the Fisher-Rao metric, so:
\[
\Big\langle\log\frac{\theta}{\theta_e},\tau(\theta)-\tau(\theta_e)\Big\rangle
\le \eta_{\FR}\|\log\theta-\log\theta_e\|_{\Diag(\theta_e)}^2.
\]
Since $ \theta_i(t) \geq \underline{\theta}$, the conditions of Lemma~\ref{lem:equiv} apply: 
\[
\|\log\theta-\log\theta_e\|_{\Diag(\theta_e)}^2 \leq a_1V(\theta), 
\]
where $a_1=\tfrac{1}{c_1}$, and therefore:
\[
\Big\langle\log\frac{\theta}{\theta_e},\tau(\theta)-\tau(\theta_e)\Big\rangle
\le \eta_{\FR}a_1V(\theta).
\]

For (B), we have that: 
\begin{align*}
\Big\langle \log\frac{\theta}{\theta_e},\theta-\theta_e\Big\rangle
&=\sum_i\theta_i\log\frac{\theta_i}{\theta_{e,i}}
+\sum_i\theta_{e,i}\log\frac{\theta_{e,i}}{\theta_i}\\
&=D_{\KL}(\theta\|\theta_e)+D_{\KL}(\theta_e\|\theta)\ge V(\theta),
\end{align*}
and therefore:
\begin{equation}\label{eq:sym-main}
-(1+\mu)\Big\langle \log\frac{\theta}{\theta_e},\theta-\theta_e\Big\rangle \leq -(1+\mu)V(\theta).    
\end{equation}

For (C), by H\"older’s inequality, we have that:
\[
\Big|\Big\langle\log\frac{\theta}{\theta_e},\varepsilon\Big\rangle\Big|
\le \Big\|\log\frac{\theta}{\theta_e}\Big\|_1\|\varepsilon\|_\infty.
\]
Letting $x=\log\theta-\log\theta_e$ and using Cauchy-Schwarz:
\begin{align*}
    \Big\|\log\frac{\theta}{\theta_e}\Big\|_1=
    \|x\|_1
    &=\sum_i|x_i|\\
    &\le
    \Big(\sum_i\frac{1}{\theta_{e,i}}\Big)^{1/2}
    \Big(\sum_i\theta_{e,i}x_i^2\Big)^{1/2}.
\end{align*}
Note that by definition, $
\sum_i\theta_{e,i}x_i^2=\|\log\theta-\log\theta_e\|_{\Diag(\theta_e)}^2 $. Hence, using Lemma~\ref{lem:equiv} again, we have that:
\[
\sum_i\theta_{e,i}x_i^2  \le  \frac{1}{c_1}V(\theta).
\]
Whenever $\theta_i\ge\underline{\theta}$, we obtain:
\[
\Big\|\log\frac{\theta}{\theta_e}\Big\|_1
\le
\tilde{a}_2\sqrt{V(\theta)},
\]
where $\tilde{a}_2=\sqrt{\tfrac{1}{c_1}\sum_i1/\theta_{e,i}}$. Note that since $ 1/\theta_{e,i} \leq 1/\underline{\theta}$, we have that $ \sum_i1/\theta_{e,i} \leq n/\underline{\theta}$ and hence:
\[
\tilde{a}_2 \le a_2:=\sqrt{\tfrac{n}{c_1\underline{\theta}}}.
\]
Substituting these bounds into the expression for $\dot V$ yields:
\[
\dot V(\theta)
\le (\eta_{\FR}a_1-(1+\mu))V(\theta)+a_2\eta\sqrt{V(\theta)},
\quad
\eta:=\|\varepsilon\|_\infty.
\]

Next we verify that the lower bound on $\mu$ in our assumption guarantees that $(1+\mu)-\eta_{\FR}a_1>0$.  
Recall that
$
\underline{\theta}=\frac{\kappa(\mu\delta-\eta)}{1+\mu}
$,
and that
$
c_1 = \frac{\underline{\theta}}{2\max_i\theta_{e,i}}
$ by Lemma~\ref{lem:equiv}.
Therefore:
\[
a_1 = \frac{1}{c_1}
= \frac{2\max_i\theta_{e,i}}{\underline{\theta}}
= \frac{2\max_i\theta_{e,i}(1+\mu)}{\kappa(\mu\delta-\eta)}\le 
\frac{2(1+\mu)}{\kappa(\mu\delta-\eta)},
\]
where we have used the fact that $\max_i\theta_{e,i}\le 1$. 
Thus:
\[
(1+\mu)-\eta_{\FR}a_1
 \ge 
(1+\mu)\left(1-\frac{2\eta_{\FR}}{\kappa(\mu\delta-\eta)}\right),
\]
and to ensure the right-hand side is positive, we require:
\[
1-\frac{2\eta_{\FR}}{\kappa(\mu\delta-\eta)} > 0
\quad\Longleftrightarrow\quad
\mu\delta-\eta > \frac{2\eta_{\FR}}{\kappa},
\]
but this is exactly enforced by the assumption that
${
\mu  \ge  \frac{\eta+\tfrac{2\eta_{\FR}}{\kappa}}{\delta}
}$.
Given this, we let:
\[
0<\lambda<(1+\mu)-\eta_{\FR}a_1
\]
and apply Young's inequality $ab\le \lambda a^2+\tfrac{b^2}{4\lambda}$ with $a=\sqrt{V(\theta)}$ and $b=a_2\eta$ to obtain: 
\[
\dot V(\theta)\le -\big((1+\mu)-\eta_{\FR}a_1-\lambda\big)V(\theta)+\frac{a_2^2\eta^2}{4\lambda}.
\]
We can indeed choose:
\[
\lambda=\tfrac12\Big((1+\mu)-\frac{\eta_{\FR}}{c_1}\Big),
\]
so that $(1+\mu)-\eta_{\FR}a_1-\lambda=\lambda$, and therefore:
\[
\dot V(\theta)\le -\lambda V(\theta)+\frac{a_2^2\eta^2}{4\lambda}.
\]
Consequently, for all $ t\ge t_\kappa$ we have that: 
\[
V(\theta(t))
\le e^{-\lambda(t-t_\kappa)}V(\theta(t_\kappa))
+\overline\upepsilon_{\FR}^2\bigl(1-e^{-\lambda(t-t_\kappa)}\bigr),
\]
where:
$
\overline\upepsilon_{\FR}^2:=\frac{a_2^2\eta^2}{4\lambda^2}
$.
By substituting $a_2$, $\lambda$, and $\underline{\theta}$, we have: 
\begin{equation}\label{eq:auxFR}
\overline\upepsilon_{\FR}^2
=\frac{n\eta^2(1+\mu)}{c_1\kappa(\mu\delta-\eta)\big((1+\mu)-\tfrac{\eta_{\FR}}{c_1}\big)^2}.    
\end{equation}
Using: 
\[
c_1=\frac{\underline{\theta}}{2\max_i\theta_{e,i}}
    =\frac{\kappa(\mu\delta-\eta)}{2(1+\mu)\max_i\theta_{e,i}},
\]
we obtain one of the terms in the denominator of~\eqref{eq:auxFR}:
\[
c_1\kappa(\mu\delta-\eta)
=\frac{\kappa^{2}(\mu\delta-\eta)^{2}}{2(1+\mu)\max_i\theta_{e,i}}. 
\]
To compute the other term, we write:
\begin{align*}
    (1+\mu)-\frac{\eta_{\FR}}{c_1}
    &=(1+\mu)-\frac{2(1+\mu)\eta_{\FR}\max_i\theta_{e,i}}{\kappa(\mu\delta-\eta)}\\
    &=(1+\mu) \left(1-\frac{2\eta_{\FR}\max_i\theta_{e,i}}{\kappa(\mu\delta-\eta)}\right),
\end{align*}
and substituting both identities yields:
\[
\overline\upepsilon_{\FR}^2
=
\frac{
2n\,\eta^{2}\,\max_i\theta_{e,i}
}{
\kappa^{2}(\mu\delta-\eta)^{2}
\left(1-\dfrac{2\eta_{\FR}\max_i\theta_{e,i}}{\kappa(\mu\delta-\eta)}\right)^{2}
}.
\]

Finally, by Lemma~\ref{lemma:dFR-D}, $d_{\FR}(\theta,\theta_e)\le \frac{\pi}{2}\sqrt{V(\theta)}$, and therefore:
\begin{align*}
d_{\FR}(\theta(t),\theta_e)
&\le \frac{\pi}{2}\sqrt{e^{-\lambda(t-t_\kappa)}D_{\KL}(\theta(t_\kappa)\|\theta_e)
+\overline\upepsilon_{\FR}^2}\\
&\le \frac{\pi}{2}\sqrt{e^{-\lambda(t-t_\kappa)}D_{\KL}(\theta(t_\kappa)\|\theta_e)}
+\upepsilon_{\FR},
\end{align*}
with $\upepsilon_\FR$ matching~\eqref{eq:FR-eps}, which establishes \eqref{eq:KL-contract-new}.

\end{proof}

\section{Conclusion}
In this work, we tackled the fundamental challenge of bounding the amount of human data required to prevent model collapse. By modeling the iterative training of generative models as a closed-loop stochastic process, we demonstrated that trajectories converge to a stable ball in the Fisher-Rao metric when the human-to-synthetic data ratio exceeds a specific threshold. We fundamentally departed from prior analysis that relies on the standard Euclidean metric. Ultimately, leveraging the information-geometric structure of the probability simplex naturally accounts for the distortions introduced by working on the simplex, yielding bounds that scale properly with the dimension of the underlying categorical distributions.

\bibliographystyle{ieeetr}%
\bibliography{biblio}

\appendix
Following we state and prove a number of technical Lemmas that the main result relies on.
\section{Auxiliary Lemmas}
\begin{lemma}\label{lem:FR-Hellinger}
For any $\theta,\vartheta\in\Delta^n$, the following inequalities hold:
\[
\sqrt{2}H(\theta,\vartheta)\le d_{\FR}(\theta,\vartheta)
\le \tfrac{\pi}{\sqrt{2}}H(\theta,\vartheta).
\]
\end{lemma}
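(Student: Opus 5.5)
Set $s:=\langle\sqrt{\theta},\sqrt{\vartheta}\rangle\in[0,1]$ (the Bhattacharyya coefficient). Then $d_{\FR}(\theta,\vartheta)=\arccos s$ by \eqref{eq:Hellinger}, and $H^2(\theta,\vartheta)=1-s$ by the identity stated in the preliminaries. It is worth checking this identity once: $\frac12\|\sqrt\theta-\sqrt\vartheta\|_2^2=\frac12(\sum_i\theta_i+\sum_i\vartheta_i)-s=1-s$, which uses $\theta,\vartheta\in\Delta^n$. The statement therefore reduces to a purely scalar inequality. Writing $\phi:=\arccos s\in[0,\pi/2]$ (since $s\ge0$), we have $1-s=1-\cos\phi=2\sin^2(\phi/2)$, so $\sqrt2 H=2\sin(\phi/2)$. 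The claim becomes
\[
2\sin(\phi/2)\le \phi\le \tfrac{\pi}{2}\cdot 2\sin(\phi/2)\cdot\tfrac{1}{\sqrt2}\cdot\sqrt2 ,
\]
that is, $2\sin(\phi/2)\le\phi\le \frac{\pi}{\sqrt2}\sin(\phi/2)\cdot\sqrt2$. Let me normalize carefully. Since $\frac{\pi}{\sqrt2}H=\frac{\pi}{\sqrt2}\sqrt2\sin(\phi/2)=\pi\sin(\phi/2)$, the two inequalities to prove are
\[
2\sin(\phi/2)\le\phi\le\pi\sin(\phi/2),\qquad \phi\in[0,\pi/2].
\]

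For the lower bound, put $x=\phi/2$; we need $\sin x\le x$ for $x\ge0$, which is standard. For the upper bound we need $x\le\frac{\pi}{2}\sin x$ on $x\in[0,\pi/4]$. This is Jordan's inequality $\sin x\ge\frac{2}{\pi}x$ on $[0,\pi/2]$, and it follows from concavity of $\sin$ on $[0,\pi]$: the chord from $(0,0)$ to $(\pi/2,1)$ lies below the graph. Since $[0,\pi/4]\subset[0,\pi/2]$, the bound applies. (A tighter constant is in fact available, but the stated one suffices.)

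The only delicate point is bookkeeping. We must make sure $s\in[0,1]$, so that $\arccos$ is defined and $\phi\in[0,\pi/2]$. Nonnegativity of $s$ is immediate from the entries being nonnegative. The bound $s\le1$ follows from Cauchy–Schwarz, $s\le\|\sqrt\theta\|_2\|\sqrt\vartheta\|_2=1$. We also take the square root $H=\sqrt{1-s}\ge0$ consistently. The inequalities hold on all of $\Delta^n$, boundary included, since nothing requires strict positivity. I expect no real obstacle beyond correctly tracking the constants $\sqrt2$ and $\pi/\sqrt2$ through the half-angle identity.
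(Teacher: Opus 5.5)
Your proposal is correct and follows essentially the same route as the paper. Both write $H=\sqrt{2}\sin(d_{\mathrm{FR}}/2)$ via the half-angle identity $1-\cos x=2\sin^2(x/2)$, then apply $\frac{2}{\pi}y\le\sin y\le y$ with $y=d_{\mathrm{FR}}/2$; your extra bookkeeping ($s\in[0,1]$ by Cauchy--Schwarz, $H^2=1-s$ on the simplex) and your remark that $y\le\pi/4$ would even give the sharper constant $\pi/2$ are valid but not needed.
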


\begin{proof}
Let $c:=\big\langle \sqrt{\theta},\sqrt{\vartheta}\big\rangle \in [0,1]$. By definition, $d_{\FR}(\theta,\vartheta)=\arccos(c)$ and $H^2(\theta,\vartheta)=1-c$. Using the identity $1-\cos x=2\sin^2(x/2)$, we have $H(\theta,\vartheta)=\sqrt{2}\sin \big(\tfrac12 d_{\FR}(\theta,\vartheta)\big)$. Since $\frac{2}{\pi} y \le \sin y \le y$ for $y\in[0,\tfrac{\pi}{2}]$, substituting $y = d_{\FR}/2$ yields the stated bounds.
\end{proof}

\begin{lemma}\label{lemma:dFR-D}
For any $\theta,\vartheta\in\Delta^n_{\mathrm{int}}$, we have: $$d_{\FR}(\theta,\vartheta)\le\frac{\pi}{2}\sqrt{D_{\mathrm{KL}}(\theta\|\vartheta)}.$$
\end{lemma}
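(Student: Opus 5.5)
The plan is to pass through the Hellinger distance: bound $d_{\FR}$ by $H$ using Lemma~\ref{lem:FR-Hellinger}, then bound $H^2$ by the KL divergence. By Lemma~\ref{lem:FR-Hellinger},
\[
d_{\FR}(\theta,\vartheta)\le \tfrac{\pi}{\sqrt{2}}\,H(\theta,\vartheta),
\qquad\text{so}\qquad
d_{\FR}^2\le \tfrac{\pi^2}{2}H^2 .
\]
The target inequality $d_{\FR}\le\frac{\pi}{2}\sqrt{D_{\KL}}$ is the same as $d_{\FR}^2\le \frac{\pi^2}{4}D_{\KL}$. It therefore suffices to prove
\[
H^2(\theta,\vartheta)\le \tfrac12 D_{\KL}(\theta\|\vartheta),
\quad\text{i.e.}\quad
2\bigl(1-\langle\sqrt\theta,\sqrt\vartheta\rangle\bigr)\le D_{\KL}(\theta\|\vartheta).
\]

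This is the classical bound of squared Hellinger distance by KL. I would prove it with Jensen's inequality on the concave function $\log$. Write
\[
-\tfrac12 D_{\KL}(\theta\|\vartheta)=\sum_i\theta_i\log\sqrt{\vartheta_i/\theta_i}\le \log\sum_i\theta_i\sqrt{\vartheta_i/\theta_i}=\log\langle\sqrt\theta,\sqrt\vartheta\rangle .
\]
Here all entries are positive, since both points lie in $\Delta^n_{\mathrm{int}}$. Hence $\frac12 D_{\KL}\ge -\log c$ with $c=\langle\sqrt\theta,\sqrt\vartheta\rangle\in(0,1]$. The elementary inequality $-\log c\ge 1-c$ then gives $\frac12 D_{\KL}\ge 1-c=H^2$, which is what we need.

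Chaining the two bounds gives $d_{\FR}^2\le\frac{\pi^2}{2}\cdot\frac12 D_{\KL}=\frac{\pi^2}{4}D_{\KL}$, and taking square roots finishes the proof.

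I do not expect any real obstacle. The only care needed is in tracking constants: the Hellinger normalization here is $H^2=1-c$, not $\frac12\|\cdot\|^2$ without the factor, so the factor $\frac12$ in $H^2\le\frac12 D_{\KL}$ has to come out exactly as above. As a check, the sharper chain $d_{\FR}=\arccos c$ with $c\ge e^{-D_{\KL}/2}$ would also work, but the route through Lemma~\ref{lem:FR-Hellinger} is the one the stated constant $\frac{\pi}{2}$ is tailored to.
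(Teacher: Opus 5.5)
Your proof is correct and follows the paper's route: combine the upper bound $d_{\FR}\le\frac{\pi}{\sqrt2}H$ from Lemma~\ref{lem:FR-Hellinger} with $D_{\KL}\ge 2H^2$. The only difference is that you derive $2H^2\le D_{\KL}$ yourself, via Jensen's inequality and $-\log c\ge 1-c$, whereas the paper simply cites it as Lemma~2.4 of Tsybakov's book.
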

\begin{proof}
By \cite[Lemma 2.4]{tsybakov2009introduction}, the KL divergence bounds the squared Hellinger distance as $D_{\mathrm{KL}}(\theta\|\vartheta)\ge 2H^2(\theta,\vartheta)$. Combining this with the upper bound in Lemma \ref{lem:FR-Hellinger} immediately establishes the result.
\end{proof}

\begin{lemma}\label{lemma:app1}
For all $u>0$ we have:
\[
u\log u-(u-1)\ge(1-\sqrt{u})^2.\]
\end{lemma}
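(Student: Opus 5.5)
We reduce the claim to a one-variable inequality. Define
\[
f(u):=u\log u-(u-1)-(1-\sqrt{u})^2 \qquad \text{for } u>0,
\]
and show $f\ge 0$. The substitution $u=s^2$ with $s>0$ removes the square root. Since $(1-s)^2 = 1-2s+s^2$, the function becomes
\[
g(s):=f(s^2)=2s^2\log s - s^2 + 1 - (1-2s+s^2) = 2s^2\log s - 2s^2 + 2s .
\]
Dividing by $2s>0$, it suffices to prove that
\[
h(s):=s\log s - s + 1 \ge 0 \qquad \text{for all } s>0.
\]

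This is the standard nonnegativity of the Bregman-type function $s\log s-(s-1)$. Its derivative is $h'(s)=\log s$, which is negative on $(0,1)$ and positive on $(1,\infty)$. Hence $h$ attains its global minimum at $s=1$, where $h(1)=0$. Alternatively, $h(s)\ge 0$ is equivalent to $\log(1/s)\ge 1-s$, which is the elementary bound $\log x\ge 1-1/x$ applied with $x=1/s$. Therefore $g(s)=2s\,h(s)\ge 0$, and so $f(u)\ge 0$ for every $u>0$, with equality exactly at $u=1$.

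There is no substantive obstacle here. The only point to get right is the algebra after substituting $u=s^2$: the constant terms cancel, leaving a common factor $2s$. Without the substitution one could instead differentiate $f$ directly, but the resulting expression still contains $\sqrt{u}$ and the sign analysis is messier. The substitution reduces everything to the single-variable inequality $s\log s\ge s-1$.

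Its role in the paper is as the pointwise inequality behind the comparison between $D_{\KL}$ and $\|\sqrt{\theta}-\sqrt{\theta_e}\|_2^2$. One applies it with $u=\theta_i/\theta_{e,i}$, multiplies by $\theta_{e,i}$, and sums over $i$. The terms $(u-1)$ sum to zero on the simplex, which yields $D_{\KL}(\theta\|\theta_e)\ge 2H^2(\theta,\theta_e)$.
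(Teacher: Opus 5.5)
Your proof is correct and follows the paper's route: substitute $s=\sqrt{u}$, factor $2s^2\log s-2s^2+2s=2s\,(s\log s-s+1)$, and show $h(s)=s\log s-s+1\ge 0$. The paper handles that last step with a second-order Taylor expansion of $h$ at $s=1$ (using $h(1)=h'(1)=0$ and $h''(s)=1/s>0$) instead of your sign analysis of $h'(s)=\log s$, and your expression for $g$ is correct where the paper's displayed one has a typo; one harmless slip in your alternative remark is that $h(s)\ge 0$ is equivalent to $\log s\ge 1-1/s$, i.e.\ the bound $\log x\ge 1-1/x$ at $x=s$, whereas $\log(1/s)\ge 1-s$ (the case $x=1/s$) is the different inequality $\log s\le s-1$.
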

\begin{proof}
 Set $s=\sqrt{u}$ and define $g(s):=2s^2\log s - s^2 + 2s - 2$.
We write $g$ as $g(a)=2s\left(s
\log s-s+1\right)$. Since $2s> 0$ for $s> 0$, it suffices to show that $h(s)=s
\log s-s+1> 0$ for $s> 0$. By Taylor's theorem $h(s)=h(1)+h'(1)(s-1)+\frac{1}{2}h''(s')(s-1)^2$  for some $s'> 0$. Computing the derivatives we obtain $h(s)=1+0(s-1)+\frac{1}{2}\frac{1}{s'}(s-1)^2$. By inspection we see that $h(s)>0$.
\end{proof}

\begin{lemma}\label{lem:scalar-g}
Let $ \map{g}{\real_{>0}}{\real}$ be defined as:
\[
g(r) := r\log r - (r-1).
\]
Then, for all $r>0$:
\begin{equation}\label{eq:scalar-g-bounds}
\frac12\min\{r,1\}(\log r)^2
 \le 
g(r)
 \le 
\frac12\max\{r,1\}(\log r)^2.
\end{equation}
\end{lemma}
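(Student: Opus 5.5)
The plan is to reduce the claim to a one-variable statement about the logarithm by substituting $r=e^{s}$ with $s=\log r\in\R$. Then $g(e^s)=se^s-e^s+1=:G(s)$, and the target inequalities become
\[
\tfrac12\min\{e^s,1\}\,s^2 \le G(s) \le \tfrac12\max\{e^s,1\}\,s^2 .
\]
We have $G(0)=0$, $G'(s)=se^s$ and $G''(s)=(1+s)e^s$. The cleanest route is the integral form of the remainder. Since $G(0)=G'(0)=0$,
\[
G(s)=\int_0^s (s-u)\,G''(u)\,du ,
\]
but $G''$ is not sign-definite. I will therefore use instead
\[
G(s)=\int_0^s G'(u)\,du=\int_0^s u e^u\,du .
\]

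Case $s\ge 0$: on $[0,s]$ we have $1\le e^u\le e^s$. It follows that
\[
\tfrac{s^2}{2}=\int_0^s u\,du\le G(s)\le e^s\int_0^s u\,du=\tfrac{e^s s^2}{2},
\]
which is exactly the claim, since $\min\{r,1\}=1$ and $\max\{r,1\}=r$ when $r\ge1$.

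Case $s<0$: write
\[
G(s)=\int_s^0 (-u)e^u\,du,
\]
where the integrand is nonnegative and $e^s\le e^u\le 1$ on $[s,0]$. Since $\int_s^0(-u)\,du=s^2/2$, this gives $\tfrac{e^s s^2}{2}\le G(s)\le \tfrac{s^2}{2}$, which again matches the claim, now with $\min=r$ and $\max=1$.

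There is essentially no obstacle here. The only care needed is the sign bookkeeping when $s<0$, namely reorienting the integral so that the integrand is nonnegative before applying the monotone bounds on $e^u$. The equality case $r=1$ is trivial. Equivalently, one could apply the mean value form $G(s)=\tfrac12 s^2\cdot(\text{weighted average of } e^u)$, but the direct integral comparison above avoids needing the positivity of $G''$.
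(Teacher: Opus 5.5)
Your proof is correct, but it takes a somewhat different route from the paper's. The paper stays in the variable $r$ and introduces two auxiliary functions, $\phi(r)=2g(r)-(\log r)^2$ and $\psi(r)=r(\log r)^2-2g(r)$, both of which vanish at $r=1$. It computes $\phi'(r)=2\log r\,(1-1/r)$ and $\psi'(r)=(\log r)^2$, and obtains each of the four inequalities from the sign of one of these derivatives on $(0,1]$ or on $[1,\infty)$. You instead pass to $s=\log r$ and use the single identity $g(e^s)=\int_0^s ue^u\,du$. This identity shows $g(r)$ as $\tfrac12(\log r)^2$ times an average of $e^u$, with weight proportional to $|u|$, over the interval between $0$ and $\log r$. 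All four inequalities then follow from the one pointwise bound $\min\{1,e^s\}\le e^u\le\max\{1,e^s\}$ on that interval. Both arguments are fully elementary and rest on the same fundamental-theorem-of-calculus comparison: the paper uses it in differentiated form, you use it in integrated form. Your version does not require guessing auxiliary functions, and it explains where the factors $\min\{r,1\}$ and $\max\{r,1\}$ come from. The paper's version avoids the change of variables and the reorientation of the integral for $s<0$, reducing everything to sign checks of explicit derivatives.
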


\begin{proof}
We begin by computing the derivative of $g$:
\begin{equation}\label{eq:g-der}
g'(r) = \log r,\qquad r>0.    
\end{equation}

We define the functions:
\[
\phi(r) := 2g(r) - (\log r)^2,
\qquad
\psi(r) := r(\log r)^2 - 2g(r).
\]
Since $g(1)=0$, we have $\phi(1)=\psi(1)=0$. We now prove the result by considering two cases:

\emph{Case $r\ge 1$:}
For the lower bound, using~\eqref{eq:g-der}:
\[
\phi'(r)
= 2g'(r) - \frac{2\log r}{r}
= 2\log r\Bigl(1-\frac1r\Bigr).
\]
If $r\ge1$, then $\log r\ge0$ and $1-\tfrac1r\ge0$, so $\phi'(r)\ge0$ on $[1,\infty)$.
Thus $\phi(r)\ge\phi(1)=0$ for all $r\ge1$, which implies:
\[
g(r) \ge \frac{1}{2}(\log r)^2,\qquad r\ge1.
\]

For the upper bound, we use $\psi$:
\begin{gather*}
    \psi'(r)
    = \frac{d}{dr}\bigl[r(\log r)^2\bigr] - 2g'(r)\\
    = (\log r)^2 + 2\log r - 2\log r
    = (\log r)^2 \ge 0.
\end{gather*}
Hence $\psi$ is increasing on $(0,\infty)$; in particular, $\psi(r)\ge\psi(1)=0$ for $r\ge1$. Therefore
\[
g(r) \le \frac{1}{2}\,r(\log r)^2,\qquad r\ge1.
\]
This establishes the bounds for the case where $ r \geq 1$. 

\emph{Case $0<r\le 1$:}
For the lower bound, we naturally use $\psi$ this time. 
As above, $\psi'(r)=(\log r)^2\ge0$ for all $r>0$, so $\psi$ is increasing on $(0,\infty)$ and:
\[
\psi(r)\le\psi(1)=0,\qquad 0<r\le1.
\]
Thus:
\[
r(\log r)^2 - 2g(r) \le 0
\quad\Longrightarrow\quad
g(r) \ge \frac12\,r(\log r)^2,\quad 0<r\le1.
\]

For the upper bound, we use $\phi$ again.
From:
\[
\phi'(r)=2\log r\Bigl(1-\frac1r\Bigr),
\]
we see that for $0<r\le1$ we have $\log r\le0$ and $1-\frac1r\le0$, so $\phi'(r)\ge0$ on $(0,1]$.
Hence $\phi$ is increasing on $(0,1]$ and:
\[
\phi(r)\le\phi(1)=0,\qquad 0<r\le1.
\]
Therefore:
\[
2g(r) - (\log r)^2 \le 0
\quad\Longrightarrow\quad
g(r) \le \frac12(\log r)^2,\quad 0<r\le1.
\]
This establishes the bound for the case where $0<r\le1 $.
Combining the two cases yields \eqref{eq:scalar-g-bounds} for all $r>0$.
\end{proof}

\begin{lemma}\label{lem:softmax-FR}
Let $\theta_e\in\Delta^n$, $T>0$, $\minthetae:=\min_i\theta_{e,i}>0$, and: 
\[
\eta_{\FR}\ :=\ \frac{1}{T\,\minthetae}.
\]
Then, for all $\theta\in\Delta^n_{\mathrm{int}}$ the following inequality holds:
\begin{equation}\label{eq:frlip}
\big\langle \log\theta-\log\theta_e,\ \tau(\theta)-\tau(\theta_e)\big\rangle
\ \le\ 
\eta_{\FR}\,\|\log\theta-\log\theta_e\|^2_{\mathrm{diag}(\theta_e)}.
\end{equation}
\end{lemma}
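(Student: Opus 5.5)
The plan is to write the tempered softmax as a smooth map of the log-coordinates and apply the mean value theorem along a straight segment in log-space. For $\theta\in\Delta^n_{\mathrm{int}}$, definition \eqref{eq:temp-motivation} gives $\tau(\theta)=\sigma(\log\theta/T)$, where $\sigma(z)_i=e^{z_i}/\sum_j e^{z_j}$ is the standard softmax on all of $\R^n$. Since $\minthetae>0$, both $\log\theta$ and $\log\theta_e$ are finite. Set $x:=\log\theta-\log\theta_e$ and $z(s):=(\log\theta_e+s x)/T$ for $s\in[0,1]$; the points $z(s)$ need not be log-probabilities, which is harmless because $\sigma$ is defined on all of $\R^n$. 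By the fundamental theorem of calculus,
\[
\tau(\theta)-\tau(\theta_e)=\frac1T\int_0^1 J(s)\,x\,ds,\qquad J(s)=\Diag(p(s))-p(s)p(s)^\top,\quad p(s):=\sigma(z(s))\in\Delta^n .
\]

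Pairing this identity with $x$ gives
\[
\big\langle x,\tau(\theta)-\tau(\theta_e)\big\rangle=\frac1T\int_0^1 x^\top J(s)x\,ds .
\]
The quadratic form is a variance: $x^\top J(s)x=\sum_i p_i(s)x_i^2-\big(\sum_i p_i(s)x_i\big)^2$, the variance of $x$ under $p(s)$. Dropping the nonnegative squared mean gives $x^\top J(s)x\le\sum_i p_i(s)x_i^2$. Using $p_i(s)\le 1$ then gives $x^\top J(s)x\le \sum_i x_i^2$, uniformly in $s$.

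To convert to the $\theta_e$-weighted norm, use $\theta_{e,i}\ge\minthetae$ to get
\[
\sum_i x_i^2\le\frac{1}{\minthetae}\sum_i\theta_{e,i}x_i^2=\frac{1}{\minthetae}\|x\|^2_{\Diag(\theta_e)} .
\]
Integrating over $s\in[0,1]$ and combining with the factor $1/T$ yields \eqref{eq:frlip} with constant $1/(T\minthetae)=\eta_{\FR}$.

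There is no serious obstacle. The only points needing care are that the Jacobian of $z\mapsto\sigma(z)$ is $\Diag(p)-pp^\top$, and that the chain rule through $z=\log\theta/T$ contributes exactly the factor $1/T$. The bound $p_i(s)\le1$ is crude, since it discards the $s$-dependence of $p(s)$, but it is all the stated constant requires.
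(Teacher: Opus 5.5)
Your proof is correct and follows essentially the same route as the paper. Both treat $\tau$ as the softmax in log-coordinates with Jacobian $\frac1T(\Diag(p)-pp^\top)$, bound its quadratic form by $\sum_i p_i v_i^2\le\|v\|_2^2$, and pass to the $\Diag(\theta_e)$-weighted norm via $\theta_{e,i}\ge\minthetae$. The only cosmetic difference is that you integrate the quadratic form along the segment directly, whereas the paper states the same step as a $1/T$-Lipschitz bound on $\nabla f$ for $f(\varphi)=T\log\sum_i e^{\varphi_i/T}$.
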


\begin{proof}
Let $\varphi=\log\theta$ and $\varphi_e=\log\theta_e$ and consider:
\[
f(\varphi)\ :=\ T\log \Big(\sum_{i=1}^n e^{\varphi_i/T}\Big).
\]
Clearly, $\nabla f(\varphi)=\tau(\theta)$. For $i,j\in\{1,\dots,n\}$, we have that: 
\[
\frac{\partial (\nabla f)_i}{\partial \varphi_j}
=\frac{\partial \tau_i(\theta)}{\partial \varphi_j}
=\frac{1}{T}\big(\tau_i(\theta)\,\delta_{ij}-\tau_i(\theta)\tau_j(\theta)\big), 
\]
where $\delta_{ij}=1$ when $i=j$ and zero otherwise, which follows from taking the derivative of: 
\[
\tau_i(\theta)=\tau_i(\varphi)=\frac{e^{\varphi_i/T}}{\sum_{k=1}^ne^{\varphi_k/T}}. 
\]
Therefore:
\[
\nabla^2 f(\varphi)\ =\ \frac{1}{T}\Big(\mathrm{diag}(\tau(\theta))-\tau(\theta)\tau(\theta)^\top\Big).
\]
For simplicity of calculations to follow, we let \(p:=\tau(\theta)\). We also let \(v\in\mathbb{R}^n\) be arbitrary, with $\|v\|_2=1.$ Then:
\begin{gather*}
v^\top \Big(\mathrm{diag}(p)-pp^\top\Big)v
= \sum_i p_i v_i^2 - \Big(\sum_i p_i v_i\Big)^2\\ \le\ \sum_i p_i v_i^2 \leq \|v\|_2^2,
\end{gather*}
where we have used the fact that $\sum_i p_i=1$. 
Hence, taking the supremum over unit vectors:
\begin{equation}
\label{Lipschitz}
\big\|\nabla^2 f(\varphi)\big\|_2
=\frac{1}{T}\sup_{\|v\|_2=1} v^\top \Big(\mathrm{diag}(p)-pp^\top\Big)v
\ \le\ \frac{1}{T}.
\end{equation}
Therefore,
for all $ x,y \in \real^n$ we have that:
\[
\langle x-y,\ \nabla f(x)-\nabla f(y)\rangle\ \le\ \frac{1}{T}\,\|x-y\|_2^2,
\]
where we used~\eqref{Lipschitz} as an upper bound for the Lipschitz constant of $\nabla f$.
Applying this with $x=\varphi$ and $y=\varphi_e$ gives:
\[
\langle \varphi-\varphi_e,\ \tau(\theta)-\tau(\theta_e)\rangle
\ \le\ \frac{1}{T}\,\|\varphi-\varphi_e\|_2^2.
\]
Since $\mathrm{diag}(\theta_e)\succeq \minthetae I$, we have
$\|\varphi-\varphi_e\|_2^2 \le (\minthetae)^{-1}\|\varphi-\varphi_e\|^2_{\mathrm{diag}(\theta_e)}$, and therefore, a substitution, yields~\eqref{eq:frlip}.
\end{proof}

\begin{lemma}\label{lem:equiv}
Fix $\underline{\theta}\in(0,1)$ and let:
\[
\mathcal{I}:=\Big\{\theta\in\Delta^n \mid \theta_i\ge \underline{\theta}\ \mathrm{for \ all } \  i\Big\}.
\]
Then, for all $\theta\in\mathcal{I}$:
\begin{equation}\label{eq:equiv}
\begin{gathered}
c_1 \|\log\theta-\log\theta_e\|_{\Diag(\theta_e)}^2
\le
D_{\mathrm{KL}}(\theta\|\theta_e)\\
\le
c_2 \|\log\theta-\log\theta_e\|_{\Diag(\theta_e)}^2,
\end{gathered}
\end{equation}
with:
\[
c_1 = \frac{\underline{\theta}}{2\max_i\theta_{e,i}},
\qquad
c_2 = \frac{1}{2\min_i\theta_{e,i}}.
\]
\end{lemma}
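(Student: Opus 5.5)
The plan is to write the KL divergence as a $\theta_e$-weighted sum of the scalar function $g$ from Lemma~\ref{lem:scalar-g}, and then bound the weights $\min\{r,1\}$ and $\max\{r,1\}$ uniformly over $\mathcal I$. Throughout, $\theta_e$ is taken in $\Delta^n_{\mathrm{int}}$, so that $\log\theta_e$ and the weighted norm are well defined.

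\textbf{Step 1 (rewriting the divergence).} Set $r_i:=\theta_i/\theta_{e,i}>0$. Since $\sum_i\theta_i=\sum_i\theta_{e,i}=1$, we have $\sum_i\theta_{e,i}(r_i-1)=0$. Subtracting this zero term gives
\[
D_{\KL}(\theta\|\theta_e)=\sum_i \theta_{e,i}\, r_i\log r_i=\sum_i\theta_{e,i}\big(r_i\log r_i-(r_i-1)\big)=\sum_i\theta_{e,i}\, g(r_i).
\]
Note also that $\log r_i=\log\theta_i-\log\theta_{e,i}$. Hence
\[
\|\log\theta-\log\theta_e\|^2_{\Diag(\theta_e)}=\sum_i\theta_{e,i}(\log r_i)^2 .
\]
So it suffices to compare $g(r_i)$ with $\tfrac12(\log r_i)^2$ coordinatewise, using constants that do not depend on $i$.

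\textbf{Step 2 (applying the scalar bounds).} Lemma~\ref{lem:scalar-g} gives
\[
\tfrac12\min\{r_i,1\}(\log r_i)^2\le g(r_i)\le\tfrac12\max\{r_i,1\}(\log r_i)^2 .
\]
\emph{Upper bound.} Since $\theta_i\le 1$, we get $r_i\le 1/\theta_{e,i}$, and $1\le 1/\theta_{e,i}$ as well. Therefore $\max\{r_i,1\}\le 1/\theta_{e,i}\le 1/\min_j\theta_{e,j}$. Multiplying by $\theta_{e,i}(\log r_i)^2/2$ and summing over $i$ yields the upper inequality with $c_2=1/(2\min_i\theta_{e,i})$.

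\emph{Lower bound.} For $\theta\in\mathcal I$ we have $r_i\ge\underline\theta/\theta_{e,i}\ge\underline\theta/\max_j\theta_{e,j}$. It remains to check that this quantity is at most $1$, so that $\min\{r_i,1\}\ge\underline\theta/\max_j\theta_{e,j}$. This holds because $\mathcal I$ is nonempty only if $n\underline\theta\le 1$, which gives $\underline\theta\le 1/n\le\max_j\theta_{e,j}$. (If $\mathcal I$ is empty, the statement is vacuous.) Summing over $i$ then gives the lower inequality with $c_1=\underline\theta/(2\max_i\theta_{e,i})$.

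\textbf{Main obstacle.} The only delicate point is the lower weight $\min\{r_i,1\}$. It needs a uniform positive floor, and this is exactly where the restriction $\theta\in\mathcal I$ is used. The extra check $\underline\theta\le\max_j\theta_{e,j}$ is what allows the minimum with $1$ to be dropped. The rest of the argument is bookkeeping once the identity $D_{\KL}=\sum_i\theta_{e,i}\,g(r_i)$ is in place.
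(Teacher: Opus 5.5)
Your proposal is correct and follows essentially the same route as the paper: the Bregman rewriting $D_{\KL}(\theta\|\theta_e)=\sum_i\theta_{e,i}\,g(r_i)$, the scalar sandwich of Lemma~\ref{lem:scalar-g}, and uniform bounds on $\min\{r_i,1\}$ and $\max\{r_i,1\}$. Bounding $\max\{r_i,1\}\le 1/\min_j\theta_{e,j}$ directly, instead of passing through $\tfrac12\sum_i x_i^2$, gives the same constant $c_2$, and deriving $\underline\theta\le 1/n$ from the nonemptiness of $\mathcal I$ supplies a step the paper only asserts.
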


\begin{proof}
Let us define: 
\[
r_i:=\frac{\theta_i}{\theta_{e,i}}\in(0,\infty),
\qquad
x_i:=\log r_i=\log\theta_i-\log\theta_{e,i}.
\]
Since $\sum_i\theta_i=\sum_i\theta_{e,i}=1$, we have $\sum_i\theta_{e,i}(r_i-1)=0$. To proceed, observe that the KL divergence can be written as a
\emph{Bregman divergence}. Recall that for a convex function 
$f:\mathbb{R}^+\to\mathbb{R}$, the Bregman divergence between $u,v>0$ is:
\[
D_f(u\|v) = f(u)-f(v)-f'(v)(u-v).
\]
Consider the convex function:
\[
f(u)=u\log u,\qquad f'(u)=\log u+1,\qquad f''(u)=\frac1u.
\]
For each $i$:
\begin{equation*}
\begin{aligned}
    D_f(r_i\|1)
    &= f(r_i)-f(1)-f'(1)(r_i-1)\\
    &= r_i\log r_i - (r_i-1)\\
    &=: g(r_i),
\end{aligned}
\end{equation*}
since $f(1)=0$ and $f'(1)=1$.  Therefore:
\[
D_{\mathrm{KL}}(\theta\|\theta_e)
= \sum_{i=1}^n \theta_{e,i} g(r_i),
\]
as we previously established that $\sum_i\theta_{e,i}(r_i-1)=0$.
By Lemma~\ref{lem:scalar-g} in the Appendix, for every $r>0$:
\[
\frac12\min\{r,1\}(\log r)^2
 \le 
g(r)
 \le 
\frac12\max\{r,1\}(\log r)^2.
\]
Applying it with $r=r_i$ and noting $x_i=\log r_i$, we obtain:
\begin{equation}\label{eq:sandwich}
\begin{aligned}
\frac12\sum_{i=1}^n \theta_{e,i}\min\{r_i,1\}x_i^2
 &\le 
D_{\mathrm{KL}}(\theta\|\theta_e)\\
 &\le 
\frac12\sum_{i=1}^n \theta_{e,i}\max\{r_i,1\}x_i^2.
\end{aligned}
\end{equation}

We now produce uniform bounds on the factors $\min\{r_i,1\}$ and $\max\{r_i,1\}$. Because $\theta\in\mathcal{I}$ and $\theta\in\Delta^n$:
\begin{equation*}
\begin{gathered}
    \underline\theta \le \theta_i \le 1,\qquad
    0<\theta_{e,i}\le \bar\theta_e:=\max_j\theta_{e,j},\\
    r_i=\frac{\theta_i}{\theta_{e,i}}\in\Big[\frac{\underline\theta}{\theta_{e,i}},\frac{1}{\theta_{e,i}}\Big].
\end{gathered}
\end{equation*}
Hence:
\begin{equation*}
\begin{gathered}
    \min\{r_i,1\}\ge \min\Big\{\frac{\underline\theta}{\theta_{e,i}},1\Big\}\ge\min\Big\{\frac{\underline\theta}{\bar\theta_e},1\Big\},\\
    \max\{r_i,1\}\le\max\Big\{\frac{1}{\theta_{e,i}},1\Big\}=\frac{1}{\theta_{e,i}},
\end{gathered}
\end{equation*}
where for the last equality we use the fact that $\theta_{e,i}\le 1$. Substituting these bounds into \eqref{eq:sandwich} gives:
\begin{equation}\label{eq:preKL-bounds}
\frac12\min \Big\{\frac{\underline\theta}{\bar\theta_e},1\Big\}\sum_{i=1}^n \theta_{e,i}x_i^2
\le
D_{\mathrm{KL}}(\theta\|\theta_e)
\le
\frac12\sum_{i=1}^n x_i^2.
\end{equation}
Finally, since
$\theta_{e,i}\le \bar\theta_e$ and $\theta_{e,i}\ge \underline\theta_e:=\min_i\theta_{e,i}$, we have:
\begin{equation}\label{eq:xnorm-weights}
\sum_{i=1}^n x_i^2 \le \frac{1}{\underline\theta_e}\sum_{i=1}^n \theta_{e,i}x_i^2.
\end{equation}
Combining \eqref{eq:preKL-bounds} and \eqref{eq:xnorm-weights} 
yields:
\[
\frac12\min \Big\{\frac{\underline\theta}{\bar\theta_e},1\Big\}
\sum_{i=1}^n \theta_{e,i}x_i^2
 \le 
D_{\mathrm{KL}}(\theta\|\theta_e)
 \le 
\frac{1}{2\underline\theta_e}
\sum_{i=1}^n \theta_{e,i}x_i^2,
\]
which is exactly:
\begin{equation*}
\begin{gathered}
    \frac12\min \Big\{\frac{\underline\theta}{\bar\theta_e},1\Big\}
    \|\log\theta-\log\theta_e\|_{\Diag(\theta_e)}^2
    \le
    D_{\mathrm{KL}}(\theta\|\theta_e)\\
    \le
    \frac{1}{2\underline\theta_e}
    \|\log\theta-\log\theta_e\|_{\Diag(\theta_e)}^2.
\end{gathered}
\end{equation*}
Note that $\bar\theta_e = \max_i \theta_{e,i}\ge 1/n$ while $\underline\theta\le 1/n$, so $\underline\theta/\bar\theta_e\le 1$ and therefore
$
\min\Big\{\frac{\underline\theta}{\bar\theta_e},1\Big\}
=
\frac{\underline\theta}{\bar\theta_e}
$,
obtaining the lower constant 
$c_1=\underline\theta/(2\bar\theta_e)$.
This establishes \eqref{eq:equiv} with
$
c_1=\underline\theta/(2\bar\theta_e)
$
and
$
c_2=1/(2\underline\theta_e).
$
\end{proof}

\end{document}